\documentclass{article}

\PassOptionsToPackage{round,authoryear}{natbib}
 \usepackage[preprint]{neurips_2026}

\usepackage[utf8]{inputenc} 
\usepackage[T1]{fontenc}    
\usepackage{hyperref}       
\usepackage{url}            
\usepackage{booktabs}       
\usepackage{amsfonts}       
\usepackage{nicefrac}       
\usepackage{microtype}      
\usepackage{xcolor}         
\usepackage{multirow}
\usepackage{tabularx}
\usepackage{makecell}

\title{Where Should Diffusion Enter a Language Model? Geometry-Guided Hidden-State Replacement}

\author{
  \textbf{Injin Kong}$^{1}$ \quad
  \textbf{Hyoungjoon Lee}$^{2}$ \quad
  \textbf{Yohan Jo}$^{1,\dag}$ \\
  $^1$Graduate School of Data Science, Seoul National University \\
  $^2$Department of Biosystems \& Biomaterials Science and Engineering, Seoul National University \\
  \texttt{mtkong77@snu.ac.kr, hjoon721@snu.ac.kr, yohan.jo@snu.ac.kr}
}

\usepackage{amsmath}
\usepackage{amssymb}
\usepackage{amsthm}
\newtheorem{theorem}{Theorem}
\newtheorem{lemma}{Lemma}
\newtheorem{corollary}{Corollary}
\usepackage{graphicx}
\usepackage{makecell}
\usepackage{enumitem}

\begin{document}

\maketitle

\begingroup
  \renewcommand{\thefootnote}{\fnsymbol{footnote}}
  \footnotetext[2]{Corresponding author.}
\endgroup

\begin{abstract}
Continuous diffusion language models lag behind autoregressive transformers, partly because diffusion is applied in spaces poorly suited to language denoising and token recovery. We propose \texttt{DiHAL}, a geometry-guided diffusion--transformer hybrid that asks where diffusion should enter a pretrained transformer. DiHAL scores layers with geometry-based proxies, selects a diffusion-friendly hidden-state interface, and replaces the lower transformer prefix with a diffusion bridge while retaining the upper layers and original LM head. By reconstructing the selected-layer hidden state rather than tokens, DiHAL avoids direct continuous-to-discrete recovery. Experiments on 8B-scale backbones show that the geometry score predicts effective shallow insertion layers under a fixed bridge-training protocol and that hidden-state recovery improves over continuous diffusion baselines in a diagnostic comparison matching the diffusion/recovery training budget. These results suggest that hidden--state geometry helps identify where diffusion--based replacement is feasible inside pretrained language models.
\end{abstract}


\section{Introduction}
\label{sec:intro}

Large language models have achieved remarkable progress across a wide range of language generation tasks, but this progress has come with increasing size and computational cost \citep{DBLP:conf/nips/BrownMRSKDNSSAA20, hoffmann2022an, yang2025qwen3technicalreport}. Diffusion models offer a different generative paradigm based on iterative denoising and have become a dominant approach in image generation \citep{song2021scorebased}. Their success has motivated growing interest in diffusion-based language generation \citep{li2022diffusionlm, strudel2023selfconditioned, nie2025large}. However, transferring diffusion from images to text is difficult because text generation must ultimately handle discrete tokens.

A natural response is to adapt diffusion to the discreteness of text. Prior work has explored discrete token corruption, masked diffusion, continuous-to-discrete recovery, and continuous diffusion over token embeddings, self-conditioned embeddings, or learned text latents \citep{li2022diffusionlm, strudel2023selfconditioned, lovelace2023latent, gong2023diffuseqv2, zhang-etal-2025-diffusion}. Despite these efforts, diffusion-based language models still lag behind autoregressive Transformers, particularly in continuous diffusion settings \citep{jo2026continuous}. A common explanation is that denoised continuous vectors must eventually be mapped back to discrete tokens, so small errors in representation space can change the recovered token \citep{li2022diffusionlm}.

Why does this gap remain? We start from a complementary hypothesis: discreteness is important but may not fully explain the gap. Transformer language models also use discrete tokens, yet most computation occurs in continuous hidden states later mapped to vocabulary logits \citep{NIPS2017_3f5ee243}. This suggests that the difficulty may arise not from continuity itself, but from applying diffusion in continuous spaces with unsuitable geometry.

If the choice of continuous space matters, then the central question becomes: what makes a representation space suitable for diffusion? We call such a space \emph{diffusion-friendly}: a space that is easy to denoise, stable under imperfect score estimates, and simple enough for diffusion to learn. We later motivate these requirements using tools from Langevin dynamics and concentration theory \citep{villani2009optimal,bakry2014analysis,ledoux2001concentration}.

Where can such a space be found in a language model? A pretrained Transformer already contains many continuous hidden spaces between the token embedding layer and the LM head. These hidden states are not decoded directly into tokens; they are consumed by the remaining Transformer layers before the LM head produces the final token distribution \citep{NIPS2017_3f5ee243}. Diffusion at an internal layer can therefore target hidden-state recovery rather than direct token recovery \citep{lovelace2023latent, Rombach_2022_CVPR}. Since hidden-state geometry varies across depth, we ask: \textbf{which transformer layer provides the most diffusion-friendly representation space?}

To answer this question, we propose \textbf{DiHAL} (\textbf{Di}ffusion-Transformer \textbf{H}ybrid \textbf{A}rchitecture for \textbf{L}anguage Generation), a hybrid architecture based on a \emph{Locate-and-Replace} strategy. As illustrated in Figure~\ref{fig:overview}, DiHAL locates diffusion-friendly layers using geometry-based criteria, then replaces the lower transformer layers with a \emph{diffusion bridge} that reconstructs the selected-layer hidden state while retaining the upper layers and original LM head for token prediction. This reduces continuous-to-discrete recovery error and reframes continuous diffusion for language as a problem of choosing the right internal representation space for denoising. Our contributions are threefold.

\begin{itemize}[leftmargin=1.2em, itemsep=0.4em, topsep=0.15em, parsep=0em]
\item We formulate diffusion insertion in pretrained transformer language models as a \textbf{geometry-guided interface-selection problem} and propose practical layer-wise proxies---local compactness, global stiffness, and effective rank---for identifying diffusion-friendly hidden spaces.

\item We introduce a fixed geometry score that narrows the search for effective insertion layers without
exhaustive layer-wise bridge training and correlates strongly with hidden-state reconstruction quality
under a one-epoch bridge-training protocol across 8B-scale backbones.

\item We introduce \textbf{DiHAL}, a Locate-and-Replace hybrid that replaces lower transformer layers with a conditional diffusion bridge and reuses the upper layers and LM head. Under a diagnostic diffusion/recovery budget, DiHAL shows that hidden-state recovery can improve generative perplexity and diversity over embedding-, latent-, and continuous-to-discrete interfaces.

\begin{figure}[t]
  \centering
  \includegraphics[width=0.9\linewidth]{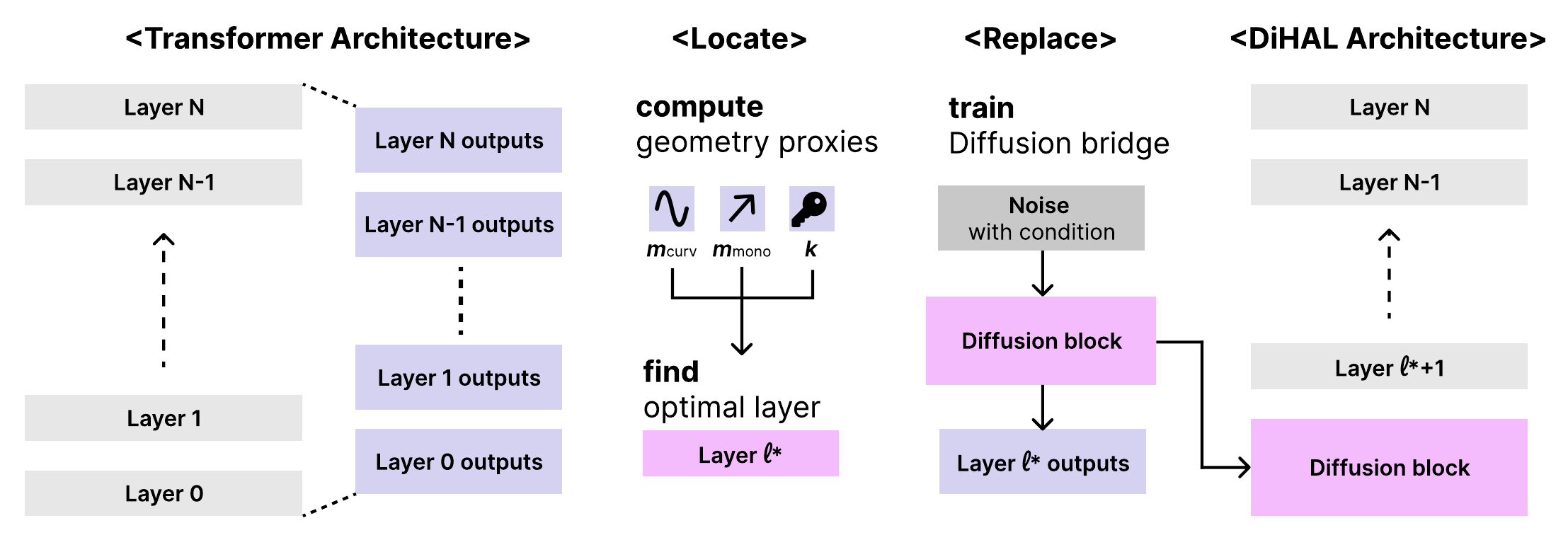}
  \caption{Locate-and-Replace framework.
Layer-wise geometric proxies score transformer layers, select an insertion point, and guide replacement with a diffusion block.
}
  \label{fig:overview}
\end{figure}

\end{itemize}


\section{Background}
\label{background}

Transformer language models take discrete tokens as input, but most computation occurs in continuous hidden spaces. Given \(x_{1:T}\), an autoregressive model factorizes \(p(x_{1:T})=\prod_{t=1}^{T}p(x_t\mid x_{<t})\). Each token is mapped to an embedding \(h_t^{(0)}=E(x_t)+p_t\), hidden states are updated as \(H^{(\ell)}=F_\ell(H^{(\ell-1)})\), and the final state is projected to vocabulary logits \(z_t=W_{\mathrm{LM}}h_t^{(L)}+b\). Thus, discreteness appears at the input and output interfaces, while intermediate computation is continuous \citep{NIPS2017_3f5ee243}.

Diffusion models generate samples by gradually adding noise to data and then learning to reverse this noising process \citep{song2021scorebased}. In continuous space, this process can be written as a stochastic differential equation \(dX_t=f(X_t,t)\,dt+g(t)\,dW_t\), where \(X_t\) is the noisy representation at time \(t\) and \(W_t\) is Brownian motion. The reverse process depends on the score \(\nabla_x\log p_t(x)\), which is approximated by a neural network. Applying this idea to language requires choosing what representation the diffusion model should denoise. Discrete diffusion models corrupt tokens directly \citep{austin2021structured, hoogeboom2021argmax}, while continuous diffusion language models denoise token embeddings or learned latent vectors \citep{li2022diffusionlm, strudel2023selfconditioned, lovelace2023latent}.

Continuous token-level diffusion suffers from recovery errors: small denoising deviations can flip the recovered token \citep{zhang-etal-2025-diffusion, shen2026codarcontinuousdiffusionlanguage}. Learned latent diffusion reduces this issue but still requires an interface for converting latents to text. We instead target internal transformer hidden states, where recovery becomes hidden-state reconstruction rather than direct token decoding.


\section{Method}
\label{sec:method}

DiHAL is a diffusion--transformer hybrid architecture that replaces part of a pretrained transformer, rather than a standalone diffusion language model. Figure~\ref{fig:overview} illustrates our Locate-and-Replace procedure. This section develops DiHAL in three steps: we first motivate diffusion-friendly representations using geometric principles from Langevin dynamics and concentration theory, then instantiate them as layer-wise proxies for locating a suitable hidden-state interface, and finally replace the lower transformer layers with a conditional diffusion bridge while retaining the upper layers and original LM head. Rather than modeling token probabilities or recovering discrete tokens directly, the bridge reconstructs an internal boundary representation that the retained upper layers can already process.

\subsection{Geometric Principles for Diffusion-Friendly Layer Selection}
\label{sec:theory}

We now make the notion of a \emph{diffusion-friendly} representation space more concrete. Intuitively, a good diffusion space should satisfy three properties: denoising should contract quickly toward the target representation distribution, remain stable under score-estimation error, and have low effective complexity, meaning that variation is concentrated in relatively few active directions.

We formalize the first two properties through overdamped Langevin dynamics, an idealized setting with clean convergence and stability guarantees. The third property is captured by effective rank, which measures active variance directions. The theorem settings in this section are idealized: they motivate geometric surrogates, not assumptions that transformer hidden states exactly satisfy them.

Throughout this section, \(W_2\) denotes the 2-Wasserstein distance and \(\mathcal P_2(\mathbb R^d)\) denotes probability measures with finite second moment. For a density \(p\), define \(U(x)=-\log p(x)\). We interpret strong convexity of \(U\) as a curvature-like restoring force toward high-density regions. Theorem~\ref{thm:logconcave} introduces the curvature parameter \(m\) and shows that larger \(m\) yields faster convergence to the target distribution.

\begin{theorem}[Wasserstein contraction under strong log-concavity \citep{villani2009optimal,bakry2014analysis}]
\label{thm:logconcave}
Let \(\mu \in \mathcal{P}_2(\mathbb{R}^d)\) be a probability measure with density \(p\), and define \(U(x):=-\log p(x)\). Assume that \(U\in C^2(\mathbb{R}^d)\), \(U\) is \(m\)-strongly convex, i.e. \(\nabla^2 U(x)\succeq mI\) for all \(x\in\mathbb{R}^d\), for some \(m>0\), and that \(\nabla U\) is globally Lipschitz. Let \((X_t)_{t\ge0}\) satisfy the overdamped Langevin stochastic differential equation (SDE) \(dX_t=-\nabla U(X_t)\,dt+\sqrt{2}\,dW_t\), where \(W_t\) denotes Brownian motion.

Then \(\mu\) is an invariant distribution of \((X_t)\), and for every initial law \(\nu_0\in\mathcal P_2(\mathbb R^d)\),
\[
W_2(\nu_t,\mu)\le e^{-mt}W_2(\nu_0,\mu),
\qquad
\nu_t:=\mathcal L(X_t),
\]
where \(\mathcal L(X_t)\) denotes the distribution of \(X_t\). 
The invariant distribution \(\mu\) is unique in \(\mathcal P_2(\mathbb R^d)\).
\end{theorem}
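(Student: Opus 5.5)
The plan is a synchronous coupling argument, with invariance checked separately via the Fokker--Planck equation. Because \(\nabla U\) is globally Lipschitz, the SDE \(dX_t=-\nabla U(X_t)\,dt+\sqrt2\,dW_t\) has a unique strong solution for every initial condition in \(L^2\), so \(\nu_t\) is well defined and stays in \(\mathcal P_2(\mathbb R^d)\).

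\emph{Invariance.} The generator is \(\mathcal L f=\Delta f-\nabla U\cdot\nabla f\). For \(f\in C_c^\infty\), integration by parts gives
\[
\int \mathcal L f\,p\,dx=\int\bigl(\Delta f\,p+\nabla f\cdot\nabla p\bigr)\,dx=0,
\]
using \(\nabla p=-p\nabla U\). Hence \(\mu\) solves the stationary Fokker--Planck equation. To pass from this to genuine invariance of the semigroup, I will use that the Lipschitz drift gives well-posedness of the martingale problem, so stationary weak solutions of the Fokker--Planck equation are invariant laws. Alternatively, invariance will also drop out of the contraction step below.

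\emph{Contraction.} Take \(\nu_0\) and \(\mu\), and let \((X_0,Y_0)\) be an optimal \(W_2\) coupling of them. Drive \(X_t\) and \(Y_t\) by the \emph{same} Brownian motion. Then the noise cancels in the difference:
\[
\tfrac{d}{dt}(X_t-Y_t)=-\bigl(\nabla U(X_t)-\nabla U(Y_t)\bigr).
\]
Strong convexity gives \(\langle\nabla U(x)-\nabla U(y),x-y\rangle\ge m|x-y|^2\), which follows by integrating \(\nabla^2U\succeq mI\) along the segment between \(x\) and \(y\). Therefore
\[
\tfrac{d}{dt}|X_t-Y_t|^2\le -2m|X_t-Y_t|^2,
\]
and Grönwall's inequality yields \(|X_t-Y_t|\le e^{-mt}|X_0-Y_0|\) pathwise. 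Assuming invariance, \(Y_t\sim\mu\) for all \(t\). Taking expectations then gives
\[
W_2(\nu_t,\mu)^2\le\mathbb E|X_t-Y_t|^2\le e^{-2mt}W_2(\nu_0,\mu)^2.
\]

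\emph{Uniqueness.} If \(\mu'\in\mathcal P_2\) is also invariant, apply the same coupling to \(\mu\) and \(\mu'\). This gives \(W_2(\mu',\mu)\le e^{-mt}W_2(\mu',\mu)\) for all \(t>0\), so \(W_2(\mu',\mu)=0\).

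The main obstacle is rigorous invariance without circularity. If the Fokker--Planck route is delicate, I will use a fixed-point argument instead. The coupling computation, applied to two arbitrary initial laws, shows that \(P_t\) is an \(e^{-mt}\)-contraction on the complete metric space \((\mathcal P_2,W_2)\). By Banach's theorem it has a unique fixed point \(\mu^\ast\), which is invariant for the whole semigroup by the semigroup property. The generator computation together with uniqueness of invariant measures then identifies \(\mu^\ast=\mu\). Finally, \(\mu\in\mathcal P_2\) holds automatically, since strong convexity forces Gaussian tails of \(p\).
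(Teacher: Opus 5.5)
Your proposal is correct and follows essentially the same route as the paper: invariance from $\int L f\,d\mu=0$ by integration by parts, a synchronous coupling started from an optimal coupling together with the monotonicity inequality $\langle \nabla U(x)-\nabla U(y),x-y\rangle\ge m\|x-y\|^2$ and Gr\"onwall, and uniqueness by applying the contraction to a second invariant law. You explicitly appeal to well-posedness of the martingale problem to upgrade $\int Lf\,d\mu=0$ to genuine invariance, which is more careful than the paper, since the paper asserts that step directly; note, though, that your Banach fixed-point fallback does not really bypass it, because identifying $\mu^\ast=\mu$ still needs either that $\mu$ is already known to be invariant or that probability solutions of $L^\ast\nu=0$ are unique.
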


Theorem~\ref{thm:logconcave} gives the first criterion. If the curvature parameter \(m\) is large, the distance to the target distribution shrinks as \(e^{-mt}\). Thus, larger \(m\) means faster contraction, which is desirable for diffusion because denoising should quickly return noisy samples to the data distribution.

Fast contraction alone is not enough. In practice, the score is unknown and estimated by a neural network. Theorem~\ref{thm:score} gives the second criterion. If the score error is at most \(\varepsilon\), the induced distributional error is bounded by \(\varepsilon/m\). Thus, larger \(m\) corresponds to stability under imperfect score estimation.

\begin{theorem}[Stability of an invariant measure under score perturbation]
\label{thm:score}
Let \(\mu\in\mathcal{P}_2(\mathbb{R}^d)\) have density \(p\) and define \(U(x):=-\log p(x)\). Assume that \(U\in C^2(\mathbb{R}^d)\) is \(m\)-strongly convex, and that \(\nabla U\) is globally Lipschitz. Let \(s(x):=\nabla\log p(x)=-\nabla U(x)\), and let \(\hat s:\mathbb{R}^d\to\mathbb{R}^d\) be globally Lipschitz and satisfy \(\sup_{x\in\mathbb{R}^d}\|\hat s(x)-s(x)\|\le \varepsilon\). Consider the two SDEs \(dX_t = s(X_t)\,dt + \sqrt{2}\,dW_t\) and \(d\hat X_t = \hat s(\hat X_t)\,dt + \sqrt{2}\,dW_t\). Assume that the second SDE admits an invariant distribution \(\hat\mu\).

Then \(\hat\mu\in\mathcal{P}_2(\mathbb{R}^d)\) and
\[
W_2(\hat\mu,\mu)\le \frac{\varepsilon}{m}.
\]
\end{theorem}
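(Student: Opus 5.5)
The plan is a synchronous coupling of the two SDEs driven by the same Brownian motion. We then show that the perturbed process is contracted toward the original one up to an additive drift error $\varepsilon$, and pass to stationarity.

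\textbf{Coupling.} Let $\hat X_0\sim\hat\mu$ and $X_0\sim\mu$ be arbitrary, and run both SDEs with the same $W_t$. Global Lipschitzness of $s$ and $\hat s$ gives strong existence and uniqueness. The noise cancels in $D_t:=\hat X_t-X_t$, so $D_t$ is absolutely continuous and
\[
\tfrac{d}{dt}\|D_t\|^2 = 2\langle D_t,\hat s(\hat X_t)-s(X_t)\rangle .
\]
Split the inner product as $\langle D_t,\hat s(\hat X_t)-s(\hat X_t)\rangle+\langle D_t,s(\hat X_t)-s(X_t)\rangle$. The first term is at most $\varepsilon\|D_t\|$. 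The second is at most $-m\|D_t\|^2$ by $m$-strong convexity of $U$, since $\langle \nabla U(y)-\nabla U(x),y-x\rangle\ge m\|y-x\|^2$. Hence, almost everywhere,
\[
\tfrac{d}{dt}\|D_t\|^2\le -2m\|D_t\|^2+2\varepsilon\|D_t\|.
\]
Where $\|D_t\|>0$ this gives $\tfrac{d}{dt}\|D_t\|\le -m\|D_t\|+\varepsilon$. To avoid differentiating $\|D_t\|$ at zero, I would work with $\sqrt{\|D_t\|^2+\delta}$ and let $\delta\to0$. Gr\"onwall then yields the pathwise bound
\[
\|D_t\|\le e^{-mt}\|D_0\|+\tfrac{\varepsilon}{m}(1-e^{-mt}).
\]

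\textbf{Second moments.} The claim $\hat\mu\in\mathcal P_2$ is part of the statement, so it must be proved rather than assumed. Take $X_0=x_0$ deterministic and $\hat X_0\sim\hat\mu$. One might hope the bound above alone gives $\sup_t\mathbb E\|\hat X_t\|^2<\infty$, but it does not: it still involves $\|D_0\|$, and $\hat\mu$ is not yet known to have a second moment. Instead:
\begin{itemize}
\item Apply It\^o's formula to $\|\hat X_t-x^*\|^2$, where $x^*$ is the minimizer of $U$.
\item Use the dissipativity $\langle \hat s(x),x-x^*\rangle\le -m\|x-x^*\|^2+\varepsilon\|x-x^*\|$, which follows from the same splitting.
\item This gives a Lyapunov bound for $\mathbb E\|\hat X_t-x^*\|^2$. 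Starting the process from a point and using ergodic averaging (Krylov--Bogoliubov), the second moment of any invariant law is bounded by roughly $(\varepsilon/m)^2+2d/m$ plus lower-order terms.
\item To make this rigorous, truncate with $\min(\|x\|^2,R)$ and use monotone convergence.
\end{itemize}

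\textbf{Conclusion.} Now start $\hat X_0\sim\hat\mu$ and $X_0\sim\mu$ with an optimal $W_2$ coupling. Stationarity gives $\hat X_t\sim\hat\mu$ and $X_t\sim\mu$ for all $t$, using that $\mu$ is invariant by Theorem~\ref{thm:logconcave}. By Minkowski applied to the pathwise bound,
\[
W_2(\hat\mu,\mu)\le \big(\mathbb E\|D_t\|^2\big)^{1/2}\le e^{-mt}W_2(\hat\mu,\mu)+\tfrac{\varepsilon}{m}.
\]
Here $W_2(\hat\mu,\mu)$ is finite because of the second-moment step. Letting $t\to\infty$ gives $W_2(\hat\mu,\mu)\le\varepsilon/m$.

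I expect the main obstacle to be the second-moment step, i.e.\ ruling out that an invariant $\hat\mu$ of the perturbed SDE has infinite second moment. This needs the Lyapunov argument together with the fact that any invariant measure is a limit of time-averaged laws, which relies on uniqueness of invariant measures or on a direct moment estimate under stationarity. The coupling computation itself is routine.
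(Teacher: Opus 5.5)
Your coupling half is the paper's argument: synchronous coupling from an optimal coupling of the two stationary laws, the split into $\hat s(\hat X_t)-s(\hat X_t)$ and $s(\hat X_t)-s(X_t)$, monotonicity of $\nabla U$, Gr\"onwall, then $t\to\infty$. The only difference is cosmetic. The paper stays with $\|Z_t\|^2$ and absorbs the cross term by Young, $2\varepsilon\|Z_t\|\le m\|Z_t\|^2+\varepsilon^2/m$, which gives $\|Z_t\|^2\le e^{-mt}\|Z_0\|^2+\frac{\varepsilon^2}{m^2}(1-e^{-mt})$. Your regularized norm $\sqrt{\|D_t\|^2+\delta}$ plus Minkowski reaches the same limit $\varepsilon/m$, with a sharper rate on $\|D_t\|$. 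Both are fine.

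The step that does not go through as written is $\hat\mu\in\mathcal P_2(\mathbb R^d)$. You flag it yourself, but the route you lead with cannot close it. Krylov--Bogoliubov only constructs \emph{some} invariant law as a limit of time averages from a point. It says nothing about the arbitrary invariant $\hat\mu$ in the hypothesis, so you would additionally need uniqueness of invariant measures for the perturbed SDE (strong Feller plus irreducibility), which you do not supply.

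The paper avoids this by working directly under stationarity. With $V=\|x\|^2$ it shows $\hat L V\le -mV+C_1$, and applies invariance as $\int \hat L V_R\,d\hat\mu=0$ to a \emph{concave} truncation $V_R=\psi_R(V)\in C_b^2$. Concavity is what lets the term $\psi_R''(V)\|\nabla V\|^2$ be dropped. This gives $m\int_{\{V\le R\}}V\,d\hat\mu\le C_1$, and $R\to\infty$ finishes.

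Your own ingredients also suffice if you integrate over $\hat\mu$ instead of ergodic-averaging. Your point-started bound $\mathbb E_x\|\hat X_t-x^*\|^2\le e^{-mt}\|x-x^*\|^2+C$ gives $\mathbb E_x\min(\|\hat X_t-x^*\|^2,R)\le\min(e^{-mt}\|x-x^*\|^2+C,R)$. Starting from $\hat X_0\sim\hat\mu$ and using stationarity, $\int\min(\|x-x^*\|^2,R)\,d\hat\mu\le\int\min(e^{-mt}\|x-x^*\|^2+C,R)\,\hat\mu(dx)\to\min(C,R)$ by dominated convergence. Monotone convergence in $R$ then finishes.

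In fact, contrary to your remark, the pathwise bound alone suffices. Start from any coupling of $\hat\mu$ and $\mu$ independent of $W$, e.g.\ the product coupling. Then $\|D_t\|\le e^{-mt}\|D_0\|+\varepsilon/m$ with $\|D_0\|<\infty$ a.s. The set of couplings of $\hat\mu$ and $\mu$ is weakly compact, so the laws of $(\hat X_{t_n},X_{t_n})$ have a limit point along some $t_n\to\infty$. By the portmanteau theorem, that limit gives zero mass to each open set $\{\|x-y\|>\varepsilon/m+\eta\}$, since $\mathbb P(\|D_{t_n}\|>\varepsilon/m+\eta)\le\mathbb P(\|D_0\|>\eta e^{mt_n})\to0$. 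This yields $\hat\mu\in\mathcal P_2(\mathbb R^d)$ and $W_2(\hat\mu,\mu)\le\varepsilon/m$ at once.
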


Together, Theorems~\ref{thm:logconcave} and~\ref{thm:score} suggest that curvature is a useful proxy for convergence speed and stability under score-estimation error. However, curvature alone does not capture whether a representation is easy to model: variation may still spread across many directions. We therefore use effective rank as a proxy for dimensionality. If activations concentrate near a low-dimensional manifold, diffusion needs to model a few meaningful directions. Here, \(\mathrm{tr}(\Sigma)\) is total variance and \(\|\Sigma\|\) is the largest covariance eigenvalue, so \(r_{\mathrm{eff}}(\Sigma)=\mathrm{tr}(\Sigma)/\|\Sigma\|\) measures the effective number of active variance directions.

\begin{lemma}[Approximate manifold support implies low effective rank]
\label{lem:manifold_reff}
Let \(X\) be an \(\mathbb{R}^d\)-valued random variable with covariance \(\Sigma := \mathrm{Cov}(X)\). Assume there exist a \(k\)-dimensional \(C^2\) manifold \(\mathcal M \subset \mathbb R^d\) and a measurable map \(\Pi:\mathbb R^d \to \mathcal M\) such that \(\mathrm{tr}(\mathrm{Cov}(\Pi(X))) \le C_1 k\), \(\mathbb E\|X-\Pi(X)\|^2 \le C_2(\delta^2+\eta)\), and \(\|\Sigma\| \ge c > 0\).
Then
\[
r_{\mathrm{eff}}(\Sigma)
:=
\frac{\mathrm{tr}(\Sigma)}{\|\Sigma\|}
\le
\frac{2C_1}{c}\,k + \frac{2C_2}{c}\,(\delta^2+\eta).
\]
In particular, if \(\delta,\eta\) are controlled constants and \(\|\Sigma\|\) is bounded above and below by constants, then
\[
r_{\mathrm{eff}}(\Sigma)=O(k).
\]
\end{lemma}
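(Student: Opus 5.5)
The plan is to bound the numerator \(\mathrm{tr}(\Sigma)\) directly and then divide by the lower bound \(\|\Sigma\|\ge c\); the manifold structure enters only through the two stated moment hypotheses, so no differential geometry of \(\mathcal M\) is needed. Write \(X = \Pi(X) + R\) with residual \(R := X-\Pi(X)\). The trace of a covariance equals the expected squared deviation from the mean, so \(\mathrm{tr}(\Sigma)=\mathbb E\|X-\mathbb E X\|^2\). Here \(\mathbb E X\) is finite because \(\mathrm{tr}(\mathrm{Cov}(\Pi(X)))<\infty\) and \(\mathbb E\|R\|^2<\infty\), which gives finite second moments.

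The key step is the elementary inequality \(\|a+b\|^2\le 2\|a\|^2+2\|b\|^2\), applied with \(a=\Pi(X)-\mathbb E\Pi(X)\) and \(b=R-\mathbb E R\). This gives
\[
\mathrm{tr}(\Sigma)\le 2\,\mathrm{tr}(\mathrm{Cov}(\Pi(X))) + 2\,\mathrm{tr}(\mathrm{Cov}(R)).
\]
Next, use that the covariance trace of \(R\) is at most its raw second moment: \(\mathrm{tr}(\mathrm{Cov}(R))=\mathbb E\|R\|^2-\|\mathbb E R\|^2\le \mathbb E\|R\|^2\le C_2(\delta^2+\eta)\). Together with \(\mathrm{tr}(\mathrm{Cov}(\Pi(X)))\le C_1 k\), this yields \(\mathrm{tr}(\Sigma)\le 2C_1k+2C_2(\delta^2+\eta)\). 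Dividing by \(\|\Sigma\|\ge c\) gives the stated bound on \(r_{\mathrm{eff}}(\Sigma)\).

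For the \(O(k)\) corollary, treat \(C_1,C_2,c,\delta,\eta\) as fixed constants. The second term is then a constant, and since \(k\ge1\) it is absorbed into \(O(k)\). The upper bound on \(\|\Sigma\|\) is not actually needed for this direction; it only makes the statement two-sided in spirit.

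I do not expect any real obstacle. The one point needing care is the factor of \(2\): it comes from bounding the cross term \(2\,\mathbb E\langle a,b\rangle\) via \(2|\langle a,b\rangle|\le\|a\|^2+\|b\|^2\), rather than assuming \(\Pi(X)\) and \(R\) are uncorrelated. That bound is exactly what produces the constants \(2C_1/c\) and \(2C_2/c\) in the statement. I also need the measurability of \(\Pi\) so that \(\Pi(X)\) is a random variable and these expectations are defined.
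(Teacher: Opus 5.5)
Your proposal is correct and follows the paper's proof essentially step for step: the decomposition $X-\mathbb E X=(\Pi(X)-\mathbb E\Pi(X))+(R-\mathbb E R)$, then $\|u+v\|^2\le 2\|u\|^2+2\|v\|^2$, then $\mathrm{tr}(\mathrm{Cov}(R))\le\mathbb E\|R\|^2$, and finally division by $\|\Sigma\|\ge c$. Your remark that $k\ge 1$ is needed to absorb the constant term into $O(k)$ matches the paper's own caveat that the bound is in general only $O(k+1)$.
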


Lemma~\ref{lem:manifold_reff} justifies using \(r_{\mathrm{eff}}(\Sigma)\) as an operational intrinsic-dimension proxy: near a \(k\)-dimensional manifold with controlled off-manifold error, effective rank is controlled by \(k\) rather than ambient dimension \(d\). Theorem~\ref{thm:intrinsic} combines this dimension control with the curvature conditions from Theorems~\ref{thm:logconcave} and~\ref{thm:score}, connecting low effective dimensionality to representation concentration while curvature controls fluctuations around the mean. The concentration part follows standard Bakry--\'Emery and Herbst arguments~\citep{bakry2014analysis,ledoux2001concentration}.

\begin{theorem}[Intrinsic dimension and effective representation complexity]
\label{thm:intrinsic}
Let \(\mu\in\mathcal{P}_2(\mathbb{R}^d)\) have density \(p(x)=Z^{-1}e^{-U(x)}\) for some \(U\in C^2(\mathbb{R}^d)\), and let \(\Sigma:=\mathrm{Cov}(\mu)\). Assume that \(\nabla^2 U(x)\succeq mI\) for all \(x\in\mathbb{R}^d\), for some \(m>0\), and that \(\nabla U\) is globally Lipschitz.

Assume moreover that there exist a \(k\)-dimensional \(C^2\) manifold \(\mathcal M\subset\mathbb R^d\) and a measurable map \(\Pi:\mathbb R^d\to \mathcal M\). For \(X\sim\mu\), suppose that \(\mathrm{tr}(\mathrm{Cov}(\Pi(X))) \le C_1 k\), \(\mathbb E\|X-\Pi(X)\|^2 \le C_2(\delta^2+\eta)\), and \(\|\Sigma\|\ge c_0>0\).

Then
\[
r_{\mathrm{eff}}(\Sigma)
\le
\frac{2C_1}{c_0}\,k+\frac{2C_2}{c_0}(\delta^2+\eta),
\]
and hence
\[
\bigl(\mathbb E\|X-\mathbb E X\|^2\bigr)^{1/2}
=
\sqrt{\mathrm{tr}(\Sigma)}
\le
\|\Sigma\|^{1/2}
\left(
\frac{2C_1}{c_0}\,k+\frac{2C_2}{c_0}(\delta^2+\eta)
\right)^{1/2}.
\]

Furthermore, there exists an absolute constant \(c>0\) such that for all \(t\ge0\),
\[
\mathbb P\!\left(
\left|
\|X-\mathbb E X\|-\mathbb E\|X-\mathbb E X\|
\right|\ge t
\right)
\le
2\exp(-cmt^2).
\]
\end{theorem}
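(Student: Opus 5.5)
The theorem has three parts, and I would prove them in order, reusing Lemma~\ref{lem:manifold_reff} for the first and a Bakry--\'Emery/Herbst argument for the last.

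\textbf{Effective-rank bound.} The hypotheses on \(\mathcal M\), \(\Pi\), \(C_1\), \(C_2\), \(\delta\), \(\eta\) and \(\|\Sigma\|\ge c_0\) are exactly those of Lemma~\ref{lem:manifold_reff}, with \(c=c_0\). The strong convexity assumption is not needed here. To make this step self-contained, I would still record the core inequality. Write \(X-\mathbb E X=(\Pi(X)-\mathbb E\Pi(X))+(R-\mathbb E R)\) with \(R:=X-\Pi(X)\). Then \((a+b)^2\le 2a^2+2b^2\) and \(\mathrm{tr}\,\mathrm{Cov}(R)\le \mathbb E\|R\|^2\) give
\[
\mathrm{tr}(\Sigma)\le 2\,\mathrm{tr}(\mathrm{Cov}(\Pi(X)))+2\,\mathbb E\|R\|^2\le 2C_1k+2C_2(\delta^2+\eta).
\]
Dividing by \(\|\Sigma\|\ge c_0\) yields the stated bound on \(r_{\mathrm{eff}}(\Sigma)\). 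Finite second moments of \(\Pi(X)\) and \(R\) follow from the hypotheses, so all quantities are well defined.

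\textbf{Second-moment bound.} This is an identity followed by a substitution. We have \(\mathbb E\|X-\mathbb E X\|^2=\mathrm{tr}(\Sigma)=\|\Sigma\|\,r_{\mathrm{eff}}(\Sigma)\). Taking square roots and inserting the previous bound gives the display.

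\textbf{Concentration.} Since \(\nabla^2U\succeq mI\), the Bakry--\'Emery criterion gives that \(\mu\) satisfies a log-Sobolev inequality with constant \(1/m\):
\[
\mathrm{Ent}_\mu(f^2)\le \tfrac{2}{m}\,\mathbb E_\mu\|\nabla f\|^2.
\]
Herbst's argument then yields, for every \(1\)-Lipschitz \(F\),
\[
\mathbb P\bigl(|F-\mathbb E F|\ge t\bigr)\le 2e^{-mt^2/2}.
\]
Apply this to \(F(x)=\|x-\mathbb E X\|\), which is \(1\)-Lipschitz by the triangle inequality. This gives the claim with \(c=1/2\).

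\textbf{Main obstacle.} The delicate step is justifying the log-Sobolev inequality in this generality. Bakry--\'Emery is usually proved through the \(\Gamma_2\) calculus for the Langevin semigroup, which needs the Lipschitz gradient assumption so that the semigroup is well posed; that assumption is available here. Alternatively, one can cite the Caffarelli contraction or Prékopa--Leindler route to Gaussian concentration. I would first cite the standard statements from \citet{bakry2014analysis} and \citet{ledoux2001concentration}. The remaining technical point is that Herbst's argument is first run on bounded Lipschitz functions. It then extends to \(F\), using integrability that already follows from \(\mu\in\mathcal P_2\).
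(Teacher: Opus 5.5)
Your proposal is correct and follows the paper's proof essentially step for step. The effective-rank bound comes from Lemma~\ref{lem:manifold_reff} with $c=c_0$, the second-moment bound from $\mathrm{tr}(\Sigma)=\|\Sigma\|\,r_{\mathrm{eff}}(\Sigma)$, and the concentration from Bakry--\'Emery plus Herbst applied to the $1$-Lipschitz map $x\mapsto\|x-\mathbb E X\|$. The only difference is that you make the normalization explicit (log-Sobolev constant $2/m$, giving $c=1/2$), whereas the paper leaves $c$ as an unspecified absolute constant.
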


Theorem~\ref{thm:intrinsic} combines Lemma~\ref{lem:manifold_reff} with concentration under strong log-concavity. It shows that low-dimensional concentration controls effective representation complexity through effective rank, while the curvature parameter \(m\) controls fluctuations around the mean through concentration of measure.

Taken together, these results are not intended as guarantees for transformer activations but as theoretical motivation for what a diffusion-friendly representation should look like. Since reverse diffusion uses time-dependent scores of noisy marginals whereas overdamped Langevin dynamics uses the fixed target-distribution score, we use these results only to motivate qualitative desiderata: contraction-like behavior, robustness to score-estimation error, and low effective complexity. A good layer should therefore exhibit strong curvature-like contraction for stable denoising and low effective dimensionality for easier modeling. Because the true density, Hessian, and manifold structure are unavailable, we approximate these ideas with empirical spectral proxies: local covariance concentration, global precision-based stiffness, and effective rank. All proofs are in Appendix~\ref{proof}.

\subsection{Locate: Finding Diffusion-Friendly Layers}
\label{sec:locate}

These theoretical results serve as surrogate motivation, not assumptions that hidden states are globally strongly log-concave. Rather than guarantees, they motivate qualitative desiderata for diffusion-friendly representations: contraction-like behavior, robustness to score-estimation error, and low effective complexity. Since the true density, Hessian, and manifold structure are unavailable, we approximate these desiderata using empirical spectral quantities (see Appendix~\ref{app:proxy-interpretation} for details). For each layer, let \(x \in \mathbb{R}^{M \times D}\) denote the activation matrix over \(M\) tokens and \(D\) hidden dimensions.

We compute three statistics on \(x\). First, the local curvature proxy \(\hat m_{\mathrm{curv}}\) is obtained from the covariance of \(k\)-nearest-neighbor neighborhoods: \(m_{\mathrm{curv}}^{(i)}=1/\lambda_{\max}(\Sigma_{\mathrm{local}}^{(i)})\), with the layer-level value taken as the median; larger values indicate compact neighborhoods. Second, the monotonicity proxy \(\hat m_{\mathrm{mono}}\) captures global directional stiffness. With \(P=(\Sigma+\lambda I)^{-1}\) denoting the regularized precision of the empirical covariance \(\Sigma\), we compute \(m_{ij}=(x_i-x_j)^\top P(x_i-x_j)/\|x_i-x_j\|^2\) for sampled pairs and take the median. Third, effective intrinsic dimension is estimated as \(\hat k=r_{\mathrm{eff}}(\Sigma)=\mathrm{tr}(\Sigma)/\|\Sigma\|\). Diffusion-friendly layers should have large curvature-related proxies and small effective rank.

We combine these into a selection score: \(\mathrm{selection\_score}(\ell)=z(\log \hat m_{\mathrm{curv}}(\ell))+z(\log \hat m_{\mathrm{mono}}(\ell))-z(\log \hat k(\ell))\), where \(z(\cdot)\) denotes layer-wise z-score normalization. The score rewards curvature proxies while penalizing effective rank. We define \emph{bridgeability} as reconstructability of a layer's hidden state by the diffusion bridge under a matched training protocol, measured by validation loss. The layer sweep evaluates whether this score predicts bridgeability, not to tune it or select an oracle layer. We select \(\ell^\ast=\arg\max_\ell \mathrm{selection\_score}(\ell)\). This is a low-cost layer-selection criterion, not a direct estimator of theoretical constants. Details are in Appendix~\ref{app:proxy-estimation}.

\subsection{Replace: Hidden-State Diffusion Module}
\label{sec:replace}

Given the selected insertion layer \(\ell^\ast\), we replace lower transformer layers with a conditional diffusion bridge. Let \(F_{1:\ell^\ast}\) denote the original computation up to layer \(\ell^\ast\), and \(F_{\ell^\ast+1:L}\) the retained upper layers. For input \(x\), the original model produces \(h_{\ell^\ast}=F_{1:\ell^\ast}(x)\). The bridge is embedding-conditioned: \(c(x)\) is derived from the source model's embedding output before the first transformer block. It is trained to reconstruct \(\hat h_{\ell^\ast}=D_\theta(c(x))\) in the same hidden space. The bridge does not generate tokens directly. Instead, it reconstructs the selected-layer hidden state \(\hat{h}_{\ell^\ast}\), which is consumed by the retained upper layers as \(h_L = F_{\ell^\ast+1:L}(\hat{h}_{\ell^\ast})\). The original LM head then maps \(h_L\) to token probabilities.

At inference time, lower layers are skipped and \(D_\theta\) maps this condition to the selected-layer hidden state. We instantiate \(D_\theta\) as a UNet-based latent denoiser, using a Stable-Diffusion-style architecture as a conditional denoising backbone for hidden-state activations rather than as an image generator; a small-scale ablation is provided in Appendix~\ref{app:bridge_architecture}. Hidden states are projected into a latent tensor, denoised, and projected back to yield \(\hat h_{\ell^\ast}\). The bridge is trained on language-model hidden states, with no text-to-image semantics or image supervision. For causal evaluation, DiHAL uses the backbone's left-to-right interface: at step \(t\), the condition uses only prefix tokens \(x_{\leq t}\), future positions are masked, and the retained causal suffix produces the next-token distribution. Attention and prefix masks are applied consistently to the conditioning pathway and retained suffix.

The main objective is hidden-state denoising rather than standalone text generation. We optimize a diffusion loss \(\mathcal{L}_{\mathrm{diff}}=\mathbb{E}_{t,\epsilon}[\|\hat\epsilon_\theta(z_t,t,c)-\epsilon\|_2^2]\) and a reconstruction loss \(\mathcal{L}_{\mathrm{rec}}=\|\hat h_{\ell^\ast}-h_{\ell^\ast}\|_2^2\). To preserve compatibility with the retained language-modeling interface, we additionally use next-token and logit-distillation losses, \(\mathcal{L}_{\mathrm{LM}}\) and \(\mathcal{L}_{\mathrm{KD}}\). The overall objective is \(\mathcal{L}=\mathcal{L}_{\mathrm{diff}}+\lambda_{\mathrm{rec}}\mathcal{L}_{\mathrm{rec}}+\lambda_{\mathrm{LM}}\mathcal{L}_{\mathrm{LM}}+\lambda_{\mathrm{KD}}\mathcal{L}_{\mathrm{KD}}\). Implementation details are provided in Appendix~\ref{app:bridge-details}.


\section{Experiments}
\label{sec:experiments}

\subsection{Experimental Setup}
\label{sec:setup}

We evaluate DiHAL on two representative 8B-scale decoder-only backbones: Llama-3.1-8B-Instruct~\citep{grattafiori2024llama3herdmodels}, which has 32 transformer layers with hidden size 4096, and Qwen3-8B~\citep{yang2025qwen3technicalreport}, which has 36 layers with the same hidden size. For each backbone, we run the source model on 300K sequences from \textit{Dolma v1.7}~\citep{soldaini2024dolmaopencorpustrillion} and save layerwise hidden states. We estimate the geometric proxies \(\hat m_{\mathrm{curv}}\), \(\hat m_{\mathrm{mono}}\), and \(\hat k=r_{\mathrm{eff}}(\Sigma)\) from 100 repeated 3K-example subsamples, rank candidate insertion layers using the fixed geometry score from Section~\ref{sec:locate}, and verify score stability on 30 additional 500-example subsamples.

To test whether the ranking predicts bridgeability, we train one bridge per candidate layer for one epoch on a 150K-example subset with a 9:1 train/validation split and measure validation bridge loss. This sweep evaluates the geometry score but does not fit it. Each bridge is embedding-conditioned and targets the corresponding layer hidden state. We instantiate it with Stable-Diffusion-v1.5-style latent denoising components~\citep{Rombach_2022_CVPR}, freezing the VAE while training the UNet and bridge-specific projections. These components are repurposed for hidden-state denoising; training uses no CLIP conditioning, text-to-image objective, or image supervision. Finally, we fully train the highest-scoring layer on the 300K-example corpus for four epochs and report negative log-likelihood (NLL), perplexity (PPL), and output-distribution KL divergence against the original pretrained model.

\paragraph{Evaluation.}
We evaluate three aspects of DiHAL. For layer selection, we compare the geometry ranking with validation bridge loss and report Spearman correlation, Kendall correlation, the best predicted layer, the best observed layer, and their rank gap. For final model quality, we report NLL and PPL on \textit{WikiText-103} \citep{wiki} and held-out \textit{Dolma v1.7}. For teacher alignment, we compute KL divergence between teacher and DiHAL logits. Additional implementation and hyperparameter details are provided in Appendix~\ref{app:exp-setup}.

\subsection{Layer-Wise Geometry}
\label{sec:layer-geometry}

\begin{figure}[t]
    \centering
    \includegraphics[width=0.48\linewidth]{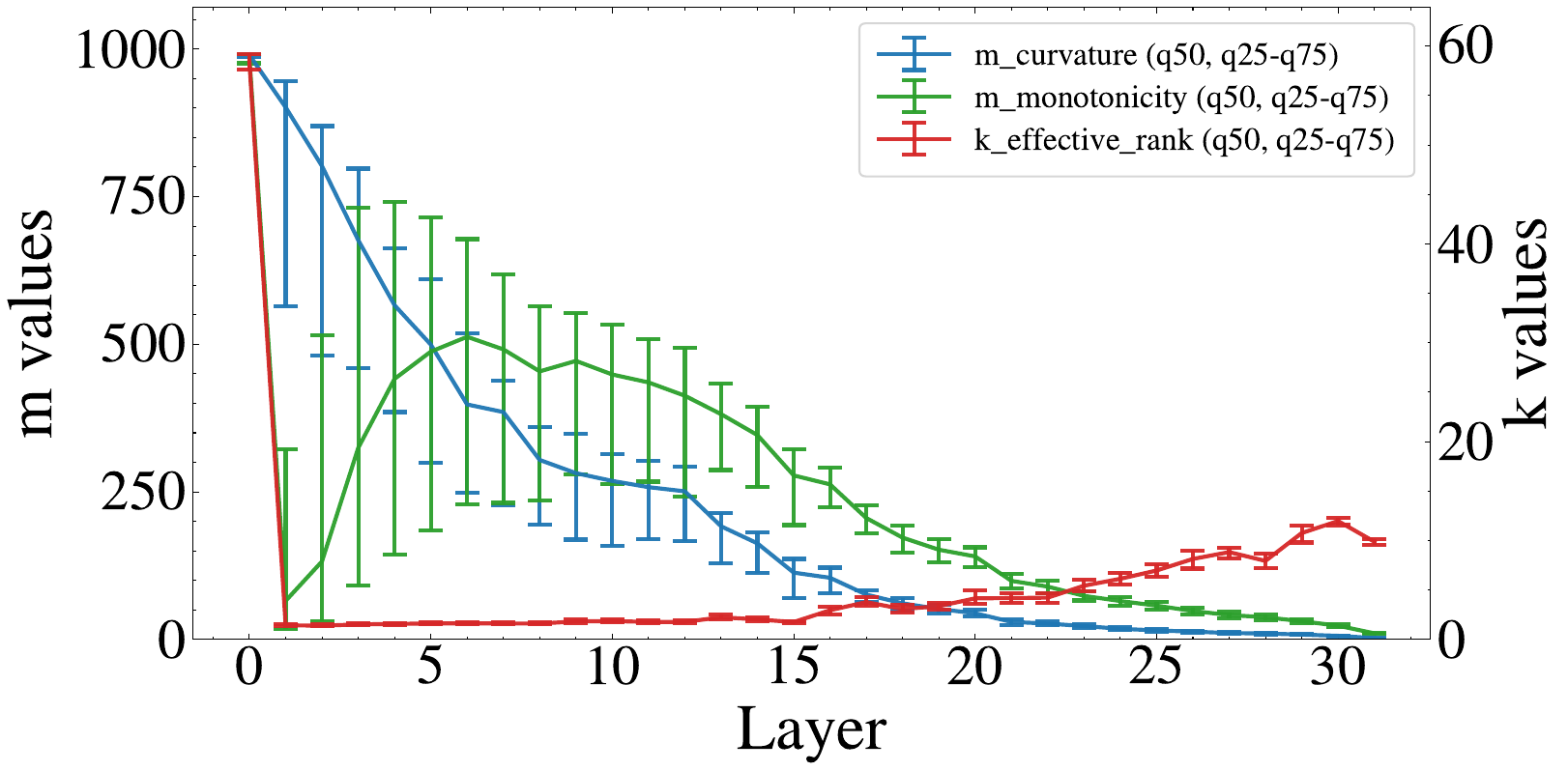}
    \hfill
    \includegraphics[width=0.48\linewidth]{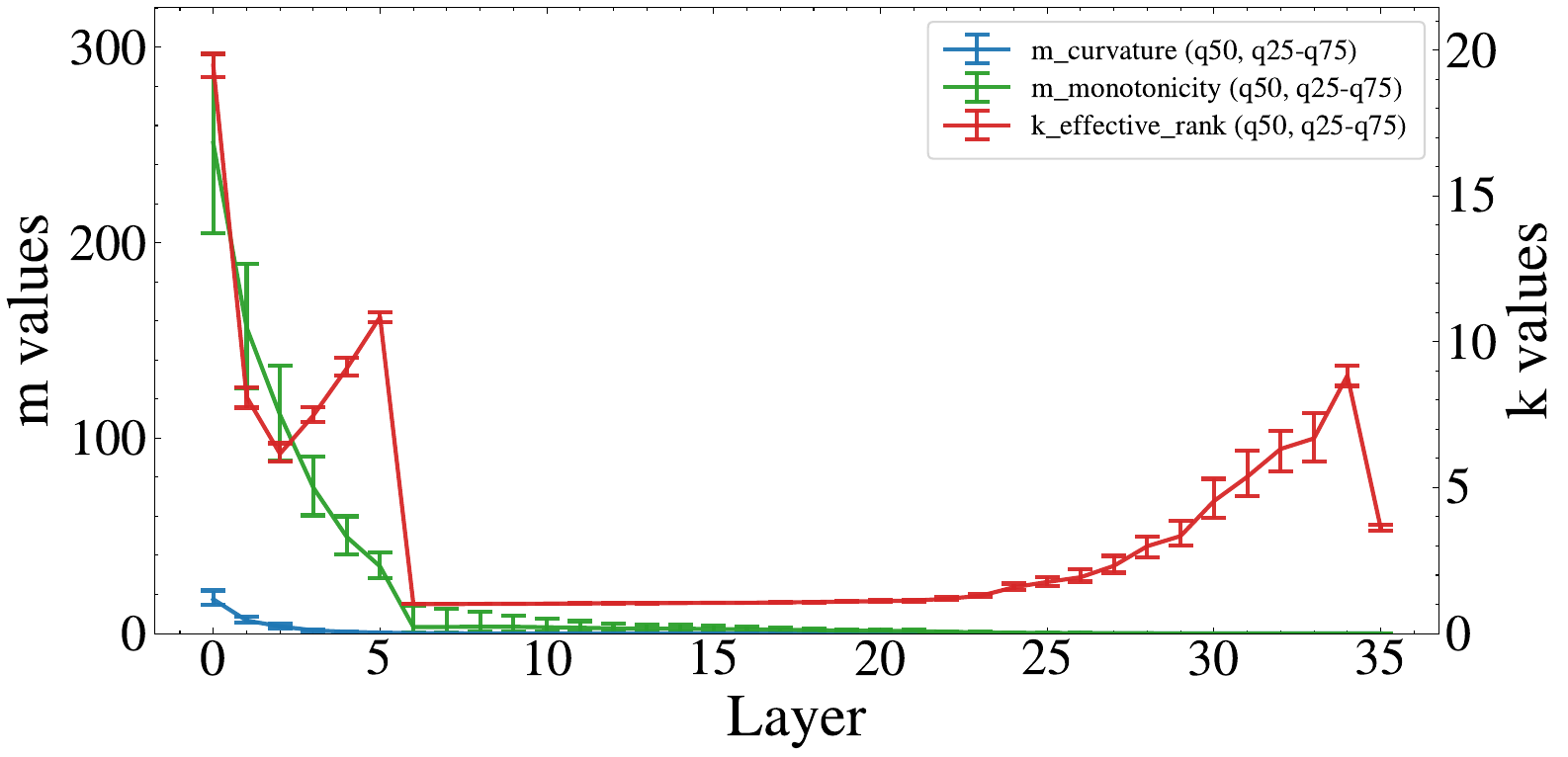}
    \caption{Layer-wise geometry of hidden representations for Llama-3.1-8B-Instruct (left) and Qwen3-8B (right). Curvature-related and effective-rank statistics vary substantially across depth. The selected insertion layer is determined by balancing local compactness, global stiffness, and effective representation complexity, rather than by maximizing a single proxy.}
    \label{fig:layerwise_geometry}
\end{figure}

We first examine whether transformer hidden representations exhibit systematic geometric variation across depth. Figure~\ref{fig:layerwise_geometry} shows clear layer dependence in both backbones: input-adjacent layers tend to have large local curvature values, while global monotonicity and effective rank follow different depth-dependent trends. These patterns suggest that transformer hidden states do not form a uniform sequence of equally suitable diffusion spaces. Large \(\hat m_{\mathrm{curv}}\) indicates locally compact neighborhoods, but local compactness does not necessarily imply globally coherent stiffness, as measured by \(\hat m_{\mathrm{mono}}\). Likewise, low effective rank alone does not guarantee strong curvature-related structure. Thus, layer selection reflects a curvature--dimension trade-off rather than optimization of a single proxy.

The fixed geometry score combines local curvature, global monotonicity, and effective rank. It selects layer 3 for Llama-3.1-8B and layer 2 for Qwen3-8B, rather than defaulting to the largest single proxy. Both selected layers are close to the embedding interface, suggesting that continuous diffusion may be suitable for hidden spaces that retain embedding-like geometric structure while remaining easier to denoise than token embeddings themselves. Since geometry alone does not guarantee bridgeability, we next test whether this ranking predicts bridgeability under a matched budget.

\subsection{Fixed-Budget Layer Sweep}
\label{sec:fixed-budget}

We perform a fixed-budget layer sweep to test whether the geometric pattern in Figure~\ref{fig:layerwise_geometry} translates into bridgeability. For each candidate layer, we train one bridge for one epoch on 150K examples and measure validation bridge loss; the sweep evaluates the geometry score but does not fit it. We compare against single-proxy baselines using only \(\hat m_{\mathrm{curv}}\) or \(\hat k\), and depth-based Early/Middle/Late baselines. Figure~\ref{fig:layerwise_geometry} suggests that embedding-adjacent layers form a distinct geometric regime, with stronger curvature-related proxies near the input and less favorable effective rank in later layers.

If the score is meaningful, higher-scoring layers should achieve lower validation loss. We therefore correlate the geometry score with negative validation bridge loss, where higher is better. Figure~\ref{fig:score_vs_val_loss} visualizes this trend, and Table~\ref{tab:fixed_budget_layer_sweep} shows that the score is not intended to select the exact minimum-loss layer but still selects layers close to the best observed loss and far below middle and late baselines. For Llama-3.1-8B, compared with selected layer 3, layer 17 has much lower curvature (\(\hat m_{\mathrm{curv}}=51.230\) vs. \(553.166\)) and higher effective rank (\(\hat k=2.679\) vs. \(1.332\)); layer 27 further decreases in curvature to \(7.643\) and increases in effective rank to \(8.305\). Qwen3-8B shows a similar pattern. These results suggest that deeper, more task-specialized representations are less favorable for diffusion-based reconstruction under the matched bridge-training protocol.

Across all candidate layers, the fixed score shows strong agreement with bridgeability. Table~\ref{tab:score_val_correlation} reports correlations over 30 repeated 500-example score-estimation runs. The score achieves Spearman \(\rho=0.9143\pm0.0069\) on Llama-3.1-8B and \(\rho=0.9267\pm0.0157\) on Qwen3-8B, with rank gaps of 2 and 1 between the best predicted and best observed layers. These results show that the fixed geometry score is useful for cheaply identifying bridgeable layers under a matched training budget.

\begin{figure}[t]
    \centering
    \includegraphics[width=0.42\linewidth]{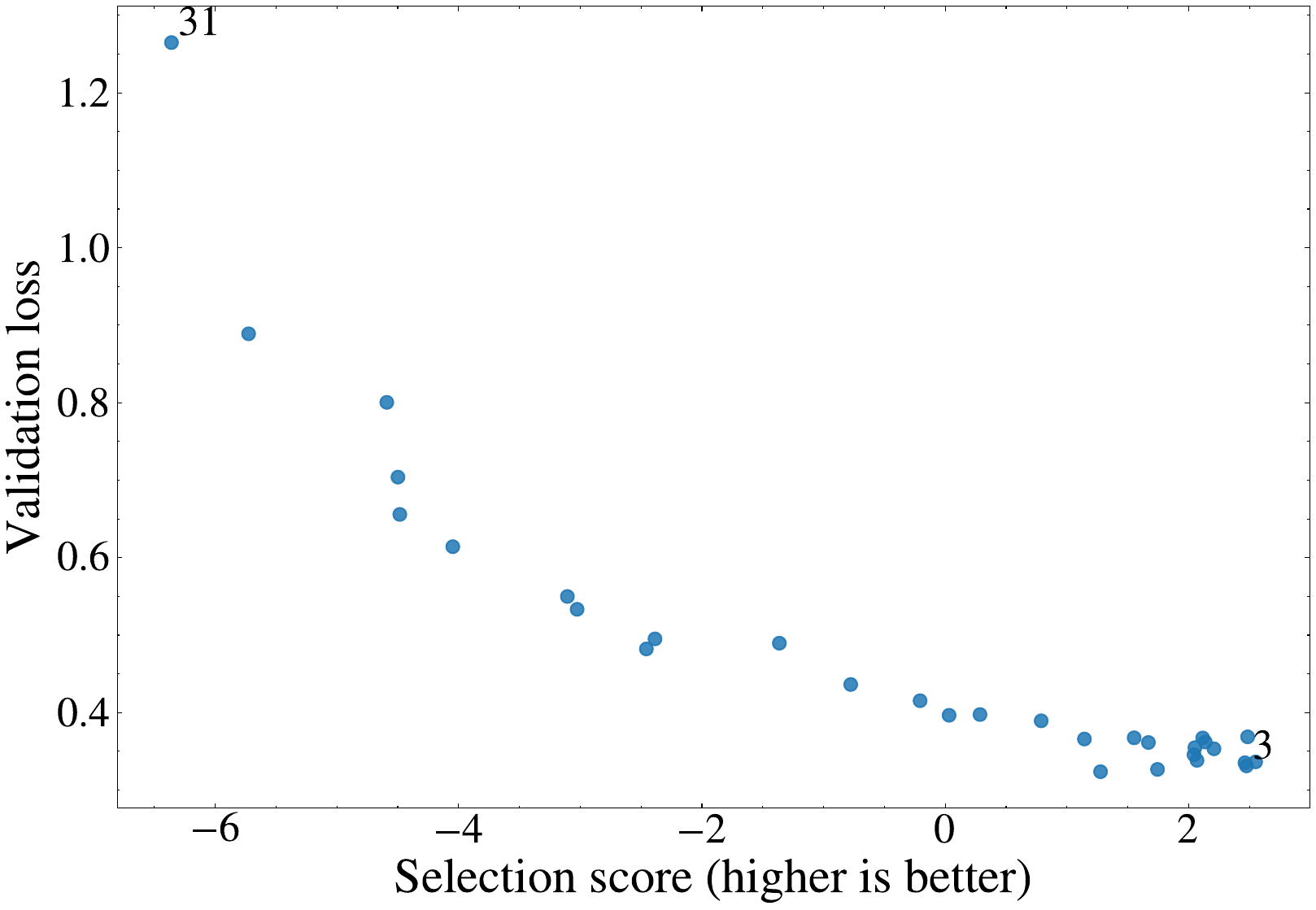}
    \hfill
    \includegraphics[width=0.42\linewidth]{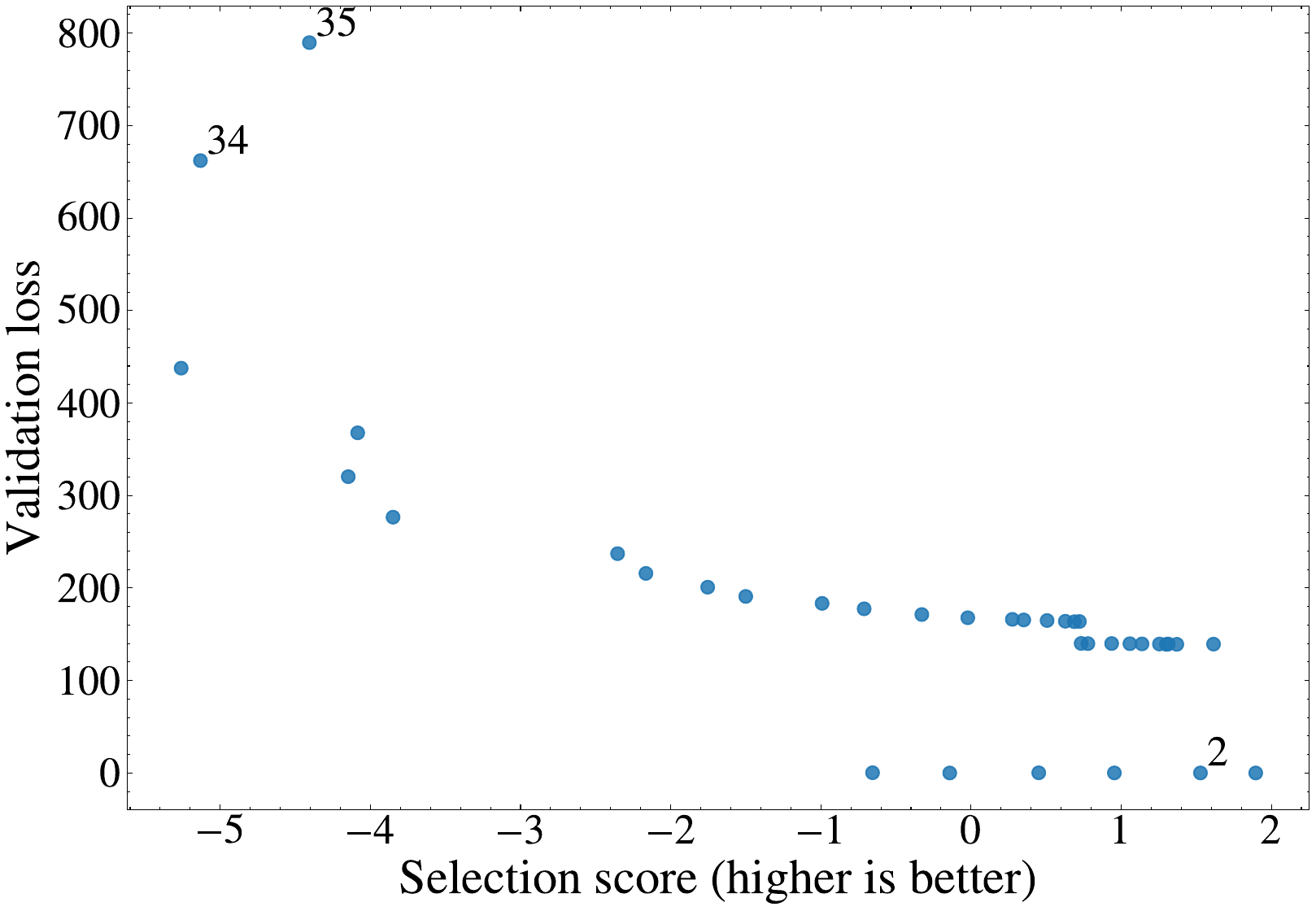}
    \caption{Fixed geometry score versus validation bridge loss for Llama-3.1-8B-Instruct (left) and Qwen3-8B (right). Each point is one candidate layer, labeled by index. Higher scores generally correspond to lower validation loss, indicating that the fixed geometry score predicts bridgeability.}
    \label{fig:score_vs_val_loss}
\end{figure}

\begin{table}[t]
\centering
\caption{Fixed-budget layer sweep results. We compare the geometry-selected layer against representative baselines under the same bridge-training budget. Validation bridge loss measures bridgeability.}
\small
\setlength{\tabcolsep}{4pt}
\resizebox{\linewidth}{!}{%
\begin{tabular}{llrrrrrr}
\toprule
Model & Layer type & Layer 
& Selection score $\uparrow$
& $\hat m_{\mathrm{curv}} \uparrow$
& $\hat m_{\mathrm{mono}} \uparrow$
& $\hat k \downarrow$
& Val. bridge loss $\downarrow$ \\
\midrule
\multirow{6}{*}{Llama-3.1-8B}
& Geometry-selected & 3  & \textbf{2.360} & 553.166 & 227.324 & 1.332 & 0.331 \\
& Curvature-only    & 1  & 1.271 & \textbf{780.707} & 45.976  & \textbf{1.310} & \textbf{0.324} \\
& Dimension-only    & 2  & 1.857 & 701.159 & 102.836 & 1.314 & 0.327 \\
& Early baseline    & 7  & 2.324 & 258.717 & \textbf{374.482} & 1.472 & 0.362 \\
& Middle baseline   & 17 & 0.265 & 51.230  & 134.336 & 2.679 & 0.397 \\
& Late baseline     & 27 & -4.094 & 7.643   & 23.769  & 8.305 & 0.656 \\
\midrule
\multirow{6}{*}{Qwen3-8B}
& Geometry-selected & 2  & \textbf{2.044} & 168.963 & 317.682 & 4.544 & 0.060 \\
& Curvature-only    & 1  & 1.661          & \textbf{292.434} & \textbf{450.606} & 8.267 & \textbf{0.059} \\
& Dimension-only    & 6  & 1.492          & 4.965   & 7.484   & \textbf{1.008} & 139.092 \\
& Early baseline    & 7  & 1.430          & 4.602   & 7.237   & 1.012 & 139.161 \\
& Middle baseline   & 18 & 0.616          & 1.036   & 3.535   & 1.097 & 164.087 \\
& Late baseline     & 30 & -3.734         & 0.034   & 0.112   & 4.089 & 276.584 \\
\bottomrule
\end{tabular}%
}
\label{tab:fixed_budget_layer_sweep}
\end{table}

\begin{table}[t]
\caption{Agreement between fixed geometry score and bridgeability under the fixed-budget sweep. Correlations use negative validation bridge loss; parentheses report standard deviations over 30 repeated 500-example score runs. Geometry proxies use 100 repeated
3K-example subsamples.}
\centering
\small
\setlength{\tabcolsep}{4pt}
\begin{tabular}{lcccccc}
\toprule
Model 
& \makecell{Spearman\\$\rho \uparrow$}
& \makecell{Kendall\\$\tau \uparrow$}
& \makecell{Pearson\\$r \uparrow$}
& \makecell{Best\\pred.}
& \makecell{Best\\obs.}
& \makecell{Rank\\gap $\downarrow$} \\
\midrule
Llama-3.1-8B
& \makecell{0.9143\\{\scriptsize $(\pm 0.0069)$}}
& \makecell{0.8011\\{\scriptsize $(\pm 0.0086)$}}
& \makecell{0.8687\\{\scriptsize $(\pm 0.0027)$}}
& 3
& 1
& 2 \\
Qwen3-8B 
& \makecell{0.9267\\{\scriptsize $(\pm 0.0157)$}}
& \makecell{0.8208\\{\scriptsize $(\pm 0.0256)$}}
& \makecell{0.8605\\{\scriptsize $(\pm 0.0098)$}}
& 2
& 1
& 1 \\
\bottomrule
\end{tabular}
\label{tab:score_val_correlation}
\end{table}


\subsection{Diagnostic Matched-Budget Comparison of Continuous Diffusion Methods}
\label{sec:continuous-diffusion-baselines}

We compare continuous diffusion methods under the same budget for training diffusion/recovery components in the Qwen3-8B setting. The baselines cover token-embedding denoising (Diffusion-LM~\citep{li2022diffusionlm}, SED~\citep{strudel2023selfconditioned}), learned-latent denoising (LD4LG~\citep{lovelace2023latent}), and continuous-to-discrete recovery (CoDAR~\citep{shen2026codarcontinuousdiffusionlanguage}). In contrast, DiHAL denoises an internal hidden state and delegates token prediction to the retained Qwen3-8B suffix and LM head. This is a diagnostic, not pretraining-matched, comparison: all methods use the same 300K-example corpus and 40 H100-hour budget for diffusion/recovery training but differ in how they reuse pretrained components.

We evaluate generated text using Gen.PPL, the perplexity assigned by a GPT-2 evaluator, and diversity, defined as the product of Distinct-1 through Distinct-4, where Distinct-\(n\) is the ratio of unique \(n\)-grams to generated \(n\)-grams. Table~\ref{tab:continuous_diffusion_baselines} shows that DiHAL achieves the lowest Gen.PPL and highest diversity in this setting. Compared with CoDAR, DiHAL improves Gen.PPL from 144.83 to 136.02 and diversity from 0.4777 to 0.5913, suggesting that moving diffusion from token recovery to hidden-state recovery can be beneficial in this diagnostic setting. Additional details are in Appendix~\ref{app:baseline-details}.

\begin{table}[t]
\centering
\caption{Diagnostic same-budget comparison of continuous diffusion target spaces and recovery interfaces. This is not a pretraining-matched comparison: DiHAL reuses a pretrained transformer suffix and LM head, while some baselines use smaller standalone recovery modules.}
\small
\setlength{\tabcolsep}{4pt}
\renewcommand{\arraystretch}{1.08}
\resizebox{\linewidth}{!}{%
\begin{tabular}{lllrr}
\toprule
Method 
& Diffusion target
& Recovery interface
& Gen.PPL $\downarrow$
& Div. $\uparrow$ \\
\midrule
DiHAL 
& Hidden state \(h_{\ell^\ast}\) 
& Retained upper layers + LM head
&  \textbf{136.02}
&  \textbf{0.5913}\\

Diffusion-LM 
& Token embeddings 
& Embedding-to-token recovery
& 683.43
& 0.4324 \\

SED
& Self-conditioned embeddings
& Embedding-to-token recovery
& 778.82
& 0.4942 \\

LD4LG 
& Learned text latent 
& Frozen BART + latent decoder
& 166.11
& 0.5797 \\

CoDAR
& Continuous token/latent states
& Continuous-to-discrete recovery
& 144.83
& 0.4777 \\
\bottomrule
\end{tabular}%
}
\label{tab:continuous_diffusion_baselines}
\end{table}


\subsection{Top-Layer Full Training and Evaluation}
\label{sec:top-layer-full-training}

We fully train the highest-ranked layer to test whether the short-budget signal transfers to a larger optimization setting. We compare it against two controls: a validation-loss oracle, defined as the layer with the lowest validation bridge loss in the fixed-budget sweep, and a worst-layer control, which tests whether poor insertion points degrade the hybrid model. Since identifying the oracle requires explicitly training bridges across candidate layers, this comparison tests whether the geometry score can select a competitive layer without using validation bridge loss as the criterion.

Table~\ref{tab:top_layer_full_training} reports the final evaluation. We also include CoDAR, a recent continuous-diffusion baseline, re-evaluated under the same pipeline. The geometry-selected layer improves over the worst-layer control and remains competitive with the validation-loss oracle. On Llama-3.1-8B, it outperforms the oracle in NLL and PPL, showing that the fixed-budget oracle is not always optimal for final language-modeling metrics. On Qwen3-8B, the oracle is slightly better, but the geometry-selected layer remains comparable without layer-wise bridge training and substantially improves over CoDAR. Thus, this evaluation supports geometry-based layer selection and hidden-state recovery, while the remaining gap to the original autoregressive teacher clarifies DiHAL’s scope.

\begin{table}[t]
\centering
\small
\caption{Final evaluation of DiHAL insertion layers against CoDAR on the combined WikiText-103 and held-out Dolma evaluation sets. CoDAR is a recent strong continuous-diffusion baseline re-evaluated in the Qwen3-8B setting under the same evaluation pipeline.}
\label{tab:top_layer_full_training}
\begin{tabular}{llcccc}
\toprule
Model & Method & Layer & NLL $\downarrow$ & PPL $\downarrow$ & KL $\downarrow$ \\
\midrule
Llama-3.1-8B 
              & DiHAL, geometry-selected & 3 & \textbf{4.91} & \textbf{135.64} & 0.73 \\
              & Validation-loss oracle & 1 & 5.11 & 165.67 & \textbf{0.62} \\
              & Worst layer & 31 & 5.17 & 175.91 & 1.32 \\
\midrule
Qwen3-8B 
          & DiHAL, geometry-selected & 2 & 4.97 & 144.03 & \textbf{0.53} \\
          & Validation-loss oracle & 1 & \textbf{4.94} & \textbf{139.77} & 0.54 \\
          & Worst layer & 35 & 5.23 & 186.79 & 1.46 \\
\midrule
Recent continuous diffusion
          & CoDAR & N/A & 5.18 & 177.87 & N/A \\
\bottomrule
\end{tabular}

\end{table}


\section{Related Work}
\label{sec:related-work}

Diffusion language models adapt diffusion to text, where discreteness makes generation less direct than in images \citep{hoogeboom2022autoregressive}. Prior work includes discrete token diffusion \citep{austin2021structured,gong2023diffuseqv2}, masked diffusion with iterative refinement \citep{sahoo2024simple, nie2025large, ye2025dream}, and continuous diffusion over embeddings or learned latents \citep{li2022diffusionlm, lovelace2023latent}. Continuous methods avoid token corruption but require recovering tokens from denoised vectors, which can introduce projection or decoding errors \citep{wang2022language, li2022diffusionlm}. We instead study transformer hidden states as a diffusion-friendly denoising space.

Diffusion behavior depends on representation geometry, including curvature, conditioning, and intrinsic dimension \citep{ pidstrigach2022scorebasedgenerativemodelsdetect, Rombach_2022_CVPR}. In parallel, efficient generation has been studied through transformer compression, distillation, layer reduction, early exiting, and hybrid modules \citep{Fan2020Reducing, sanh2020distilbertdistilledversionbert, lenz2025jamba}. DiHAL connects these directions by identifying internal transformer representations suitable for diffusion-based replacement.


\section{Limitations}
\label{sec:limitations}

Continuous diffusion language modeling still faces important limitations at scale, especially because it must learn both continuous denoising and reliable recovery back to discrete language. \texttt{DiHAL} mitigates this difficulty by moving diffusion to an internal hidden-state interface, but it is not a standalone diffusion language model: token prediction still depends on the retained transformer suffix and LM head. Due to compute constraints, we do not fully explore larger bridges, longer training, or deeper replacement. Future work could incorporate the proposed geometric proxies directly into bridge training, making deeper hidden states more bridgeable and potentially enabling replacement of larger transformer prefixes. We therefore view DiHAL as a step toward understanding where continuous diffusion can operate inside pretrained language models.

\section{Conclusion}
\label{sec:conclusion}

We introduced \texttt{DiHAL}, a geometry-guided diffusion--transformer hybrid that moves continuous diffusion from token-level recovery to internal hidden-state reconstruction. Instead of treating the continuous-to-discrete interface as an unavoidable bottleneck, DiHAL asks where inside a pretrained language model diffusion should operate. It locates diffusion-friendly layers using curvature, monotonicity, and effective-rank proxies, then replaces the lower transformer prefix with a conditional diffusion bridge while preserving the upper layers and original LM head. This yields a simple but important reframing: continuous diffusion need not generate language by directly recovering tokens; it can reconstruct a representation that the pretrained transformer already knows how to decode.

Our experiments show that this interface choice is not incidental. Across two 8B-scale backbones, geometry-selected insertion points are embedding-adjacent, predict bridgeability under fixed-budget training, and remain competitive with validation-loss oracles after full training. Middle and late hidden states are much harder to reconstruct, suggesting that diffusion failures in language reflect not only discreteness but also a mismatch between denoising and representation-space geometry.

DiHAL is not yet a standalone diffusion language model and still relies on retained pretrained layers. However, this limitation sharpens the paper's main lesson: successful continuous diffusion for language may require choosing or learning the right internal interface, rather than applying diffusion uniformly to arbitrary continuous spaces. By identifying where diffusion can effectively enter an existing language model, DiHAL provides a step toward principled diffusion--transformer hybrids and future models that use geometry not only to locate, but also to train, more bridgeable representations.


\bibliographystyle{plainnat}
\bibliography{refs}


\appendix

\section{Proofs of Theorems}
\label{proof}

\subsection*{Preliminaries}

\paragraph{Law of a random variable.}
Let $(\Omega,\mathcal{F},\mathbb{P})$ be a probability space and 
$X:\Omega \to \mathbb{R}^d$ be a random variable.
The law (distribution) of $X$, denoted by $\mathcal{L}(X)$,
is the probability measure defined by
\[
\mathcal{L}(X)(A) := \mathbb{P}(X \in A),
\quad A \in \mathcal{B}(\mathbb{R}^d).
\]

\paragraph{2-Wasserstein distance.}
Let $\mathcal{P}_2(\mathbb{R}^d)$ denote the set of probability measures 
with finite second moment. For $\nu,\mu \in \mathcal{P}_2(\mathbb{R}^d)$,
\[
W_2^2(\nu,\mu)
:=
\inf_{\pi \in \Pi(\nu,\mu)}
\int_{\mathbb{R}^d \times \mathbb{R}^d}
\|x-y\|^2 \, \pi(dx,dy),
\]
where $\Pi(\nu,\mu)$ denotes the set of couplings of $\nu$ and $\mu$.
Equivalently,
\[
W_2^2(\nu,\mu)
=
\inf
\left\{
\mathbb{E}\|X-Y\|^2
:\,
\mathcal{L}(X)=\nu,
\mathcal{L}(Y)=\mu
\right\}.
\]

In $\mathbb{R}^d$, an optimal coupling attaining the infimum exists.
If preferred, one may instead work with $\varepsilon$-optimal couplings
and let $\varepsilon \downarrow 0$ at the end.

\begin{lemma}[Strong convexity implies gradient monotonicity]
\label{lem:monotone}
Let $U:\mathbb{R}^d \to \mathbb{R}$ be differentiable and
$m$-strongly convex, i.e.
\[
U(y) \ge U(x)
+ \langle \nabla U(x), y-x \rangle
+ \frac{m}{2}\|y-x\|^2
\quad \text{for all } x,y.
\]
Then
\[
\langle \nabla U(x)-\nabla U(y), x-y \rangle
\ge m \|x-y\|^2
\quad \text{for all } x,y.
\]
\end{lemma}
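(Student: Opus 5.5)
The argument is the standard symmetrization trick: write the strong-convexity inequality twice with the roles of \(x\) and \(y\) exchanged, and add the two copies.

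Fix arbitrary \(x,y\in\mathbb{R}^d\). The hypothesis at the base point \(x\) evaluated at \(y\) gives
\[
U(y) \ge U(x) + \langle \nabla U(x), y-x\rangle + \tfrac{m}{2}\|y-x\|^2 .
\]
The same hypothesis at the base point \(y\) evaluated at \(x\) gives
\[
U(x) \ge U(y) + \langle \nabla U(y), x-y\rangle + \tfrac{m}{2}\|x-y\|^2 .
\]

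Add the two inequalities. Every term is finite because \(U\) is real-valued and differentiable, so the values \(U(x)\) and \(U(y)\) cancel and we obtain
\[
0 \ge \langle \nabla U(x), y-x\rangle + \langle \nabla U(y), x-y\rangle + m\|x-y\|^2 .
\]
Since \(\langle \nabla U(x), y-x\rangle = -\langle \nabla U(x), x-y\rangle\), the two inner products combine to \(-\langle \nabla U(x)-\nabla U(y), x-y\rangle\). Rearranging yields
\[
\langle \nabla U(x)-\nabla U(y), x-y\rangle \ge m\|x-y\|^2 ,
\]
which is the claim.

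There is no genuine obstacle here; the only point needing care is the sign bookkeeping when the two inner products are merged. No second-order smoothness is used: differentiability suffices, as the statement assumes. When \(U\in C^2\) with \(\nabla^2 U\succeq mI\), as in Theorem~\ref{thm:logconcave}, the first-order strong-convexity inequality follows from Taylor's formula with integral remainder. This lets the lemma be applied directly in the later coupling arguments.
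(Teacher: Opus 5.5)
Your proof is correct and is the same argument the paper uses: write the first-order strong-convexity inequality at base points $x$ and $y$, add the two, and cancel $U(x)+U(y)$. Your closing remark is also sound: the Hessian condition $\nabla^2 U\succeq mI$ gives the first-order inequality via Taylor's formula with integral remainder, a step the paper leaves implicit when it invokes this lemma in its coupling proofs.
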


\begin{proof}
Apply strong convexity twice, swapping $x$ and $y$:
\[
U(y) \ge U(x)
+ \langle \nabla U(x), y-x \rangle
+ \frac{m}{2}\|y-x\|^2,
\]
\[
U(x) \ge U(y)
+ \langle \nabla U(y), x-y \rangle
+ \frac{m}{2}\|x-y\|^2.
\]
Summing the two inequalities yields
\[
\langle \nabla U(x)-\nabla U(y), x-y \rangle
\ge m\|x-y\|^2.
\]
\end{proof}

\begin{lemma}[Gibbs invariance and uniqueness]
\label{lem:gibbs_invariant}
Assume that $U \in C^2(\mathbb{R}^d)$, $\nabla U$ is globally Lipschitz, and
\[
\nabla^2 U(x) \succeq mI
\qquad \text{for all } x \in \mathbb{R}^d
\]
for some $m>0$.
Let
\[
\mu(dx) = Z^{-1} e^{-U(x)}\,dx,
\qquad
Z := \int_{\mathbb{R}^d} e^{-U(x)}\,dx.
\]
Then $Z<\infty$, $\mu \in \mathcal P_2(\mathbb{R}^d)$, and $\mu$ is invariant for
\[
dX_t = -\nabla U(X_t)\,dt + \sqrt{2}\,dW_t.
\]
Moreover, if for every initial law $\nu_0 \in \mathcal P_2(\mathbb{R}^d)$ the law
$\nu_t := \mathcal L(X_t)$ satisfies
\[
W_2(\nu_t,\mu) \le e^{-mt} W_2(\nu_0,\mu),
\]
then $\mu$ is the unique invariant distribution in $\mathcal P_2(\mathbb{R}^d)$.
\end{lemma}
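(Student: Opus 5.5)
The statement to prove is Lemma~\ref{lem:gibbs_invariant}. The proof has three parts: integrability and finite second moment, invariance via the Fokker--Planck equation, and uniqueness from the assumed contraction.

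\textbf{Integrability and second moment.} Set \(x^\ast\) to be the unique minimizer of \(U\). It exists because strong convexity makes \(U\) coercive and continuous. Taylor's formula with \(\nabla^2U\succeq mI\) and \(\nabla U(x^\ast)=0\) gives
\[
U(x)\ge U(x^\ast)+\tfrac m2\|x-x^\ast\|^2 .
\]
Hence \(e^{-U}\) is dominated by a Gaussian density up to a constant. This immediately gives \(Z<\infty\) and \(\int\|x\|^2e^{-U}\,dx<\infty\), so \(\mu\in\mathcal P_2(\mathbb R^d)\). The constant \(Z\) is also positive, since the integrand is positive and continuous.

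\textbf{Invariance.} Global Lipschitzness of \(\nabla U\) gives strong existence and uniqueness of solutions, and the generator is \(\mathcal Af=\Delta f-\langle\nabla U,\nabla f\rangle\). For \(f\in C_c^\infty\), integrate by parts against \(e^{-U}\):
\[
\int(\Delta f)e^{-U}\,dx=-\int\langle\nabla f,\nabla e^{-U}\rangle\,dx=\int\langle\nabla f,\nabla U\rangle e^{-U}\,dx .
\]
So \(\int\mathcal Af\,d\mu=0\). Boundary terms vanish by compact support, and \(\nabla U\) is continuous, so everything is integrable.

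To upgrade infinitesimal invariance to invariance of the semigroup, I would use a standard criterion, for example Echeverría's theorem or the uniqueness of solutions of the martingale problem (well-posed under Lipschitz coefficients). Alternatively, I would take \(\nu_0=\mu\) and use that \(\mathbb E f(X_t)-\mathbb E f(X_0)=\int_0^t\mathbb E\,\mathcal Af(X_s)\,ds\) together with uniqueness of solutions of the linear Fokker--Planck equation in the class of probability-measure flows for Lipschitz drift. This upgrade is the only delicate point. I would handle it by citing the well-posedness result rather than reproving it, noting that \(\mathcal A\) with Lipschitz drift and \(C_c^\infty\) is a core.

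\textbf{Uniqueness.} Let \(\tilde\mu\in\mathcal P_2\) be invariant and start the SDE from \(\nu_0=\tilde\mu\). Then \(\nu_t=\tilde\mu\) for all \(t\), so the hypothesis yields
\[
W_2(\tilde\mu,\mu)\le e^{-mt}W_2(\tilde\mu,\mu).
\]
Here \(W_2(\tilde\mu,\mu)<\infty\) because both measures have finite second moments. Letting \(t\to\infty\) gives \(W_2(\tilde\mu,\mu)=0\). Since \(W_2\) is a metric on \(\mathcal P_2\), we conclude \(\tilde\mu=\mu\).
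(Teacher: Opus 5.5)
Your proposal is correct and follows essentially the same route as the paper. The paper also bounds $e^{-U}$ by a Gaussian using strong convexity (it expands around $0$ and completes the square, where you expand around the minimizer), integrates by parts to get $\int L\varphi\,d\mu=0$ for $\varphi\in C_c^\infty$, and proves uniqueness by applying the contraction hypothesis with $\nu_0$ equal to the invariant law. The one place you go further is the passage from this infinitesimal identity to genuine invariance of $\mu$ under the semigroup: the paper simply asserts it, while you rightly flag it as the delicate step and close it via Echeverr\'{\i}a's theorem with well-posedness of the martingale problem, or via uniqueness for the Fokker--Planck equation with Lipschitz drift.
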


\begin{proof}
By strong convexity, for every $x\in\mathbb{R}^d$,
\[
U(x)
\ge
U(0) + \langle \nabla U(0), x\rangle + \frac m2 \|x\|^2.
\]
Completing the square, this implies
\[
e^{-U(x)} \le C \exp\!\left(-\frac m4 \|x\|^2\right)
\]
for some constant $C<\infty$. Hence $Z<\infty$ and $\mu$ has finite second moment.

Let
\[
L\varphi(x) := \Delta \varphi(x) - \langle \nabla U(x), \nabla \varphi(x)\rangle
\]
be the generator. For $\varphi \in C_c^\infty(\mathbb{R}^d)$, integration by parts gives
\[
\int_{\mathbb{R}^d} L\varphi(x)\, \mu(dx)
=
Z^{-1}
\int_{\mathbb{R}^d}
\bigl(
\Delta \varphi(x)
-
\langle \nabla U(x), \nabla \varphi(x)\rangle
\bigr)
e^{-U(x)}\,dx
=
0.
\]
Since this holds for all $\varphi \in C_c^\infty(\mathbb R^d)$, the stationary equation
$L^*\mu=0$ holds in the weak sense. Therefore $\mu$ is invariant.

Finally, if $\nu \in \mathcal P_2(\mathbb R^d)$ is any invariant distribution, then applying the
contraction estimate with $\nu_0=\nu$ yields
\[
W_2(\nu,\mu) = W_2(\nu_t,\mu) \le e^{-mt} W_2(\nu,\mu)
\qquad \text{for all } t\ge0.
\]
Fix any $t>0$. Since $e^{-mt}<1$, this implies $W_2(\nu,\mu)=0$, hence $\nu=\mu$.
\end{proof}

\subsection*{Proof of Theorem~\ref{thm:logconcave}}

By Lemma~\ref{lem:gibbs_invariant}, $\mu$ is an invariant distribution for
\[
dX_t = -\nabla U(X_t)\,dt + \sqrt{2}\,dW_t.
\]

Let $\nu_0 := \mathcal{L}(X_0)$.
Choose an optimal coupling $(X_0,Y_0)$ between $\nu_0$ and $\mu$ such that
\[
\mathbb{E}\|X_0-Y_0\|^2
=
W_2^2(\nu_0,\mu).
\]
(Alternatively, choose an $\varepsilon$-optimal coupling and let $\varepsilon \downarrow 0$.)
We take this initial coupling to be independent of the Brownian motion below.

Define the synchronous coupling:
\[
dX_t = -\nabla U(X_t)\,dt + \sqrt{2}\,dW_t,
\]
\[
dY_t = -\nabla U(Y_t)\,dt + \sqrt{2}\,dW_t,
\]
driven by the same Brownian motion $W_t$.

Since $Y_0 \sim \mu$ and $\mu$ is invariant, we have
\[
\mathcal{L}(Y_t)=\mu
\qquad \text{for all } t\ge0.
\]

Let $Z_t := X_t - Y_t$.
Subtracting the SDEs gives
\[
dZ_t = -(\nabla U(X_t)-\nabla U(Y_t))\,dt.
\]
Since the stochastic terms cancel, $Z_t$ has absolutely continuous paths and satisfies the
pathwise ODE
\[
\dot Z_t = -(\nabla U(X_t)-\nabla U(Y_t))
\qquad \text{for a.e. } t\ge0.
\]
Therefore the ordinary chain rule applies, and
\[
\frac{d}{dt}\|Z_t\|^2
=
2\langle Z_t,\dot Z_t\rangle
=
-2\langle Z_t, \nabla U(X_t)-\nabla U(Y_t) \rangle.
\]
By Lemma~\ref{lem:monotone},
\[
\frac{d}{dt}\|Z_t\|^2
\le -2m\|Z_t\|^2.
\]
By Gr\"onwall's inequality,
\[
\|Z_t\|^2
\le e^{-2mt}\|Z_0\|^2.
\]
Taking expectation yields
\[
\mathbb{E}\|X_t-Y_t\|^2
\le
e^{-2mt}
\mathbb{E}\|X_0-Y_0\|^2.
\]

Since $(X_t,Y_t)$ is a coupling of $\mathcal{L}(X_t)$ and $\mu$,
by definition of $W_2$,
\[
W_2^2(\mathcal{L}(X_t),\mu)
\le
\mathbb{E}\|X_t-Y_t\|^2.
\]
Therefore,
\[
W_2^2(\mathcal{L}(X_t),\mu)
\le
e^{-2mt}
W_2^2(\nu_0,\mu).
\]
Taking square roots gives
\[
W_2(\mathcal{L}(X_t),\mu)
\le
e^{-mt}
W_2(\nu_0,\mu).
\]

It remains to prove uniqueness of the invariant distribution in
$P_2(\mathbb{R}^d)$. Let $\pi\in P_2(\mathbb{R}^d)$ be any invariant
distribution of the same Langevin SDE. Applying the contraction estimate above
with $\nu_0=\pi$ gives, for every $t>0$,
\[
W_2(\pi,\mu)
=
W_2(\mathcal{L}(X_t),\mu)
\le
e^{-mt}W_2(\pi,\mu),
\]
where we used the invariance of $\pi$ to obtain $\mathcal{L}(X_t)=\pi$.
Since $m>0$ and $e^{-mt}<1$ for $t>0$, this implies
\[
W_2(\pi,\mu)=0.
\]
Hence $\pi=\mu$. Therefore $\mu$ is the unique invariant distribution in
$P_2(\mathbb{R}^d)$.
\qed

\subsection*{Proof of Theorem~\ref{thm:score}}

We prove stability of an invariant measure under a uniformly bounded perturbation of the score function.

Let $\mu \in \mathcal{P}_2(\mathbb{R}^d)$ have density $p$ and define
\[
U(x) := -\log p(x).
\]
Assume that $U \in C^2(\mathbb{R}^d)$ is $m$-strongly convex, i.e.
\[
\nabla^2 U(x) \succeq m I
\quad \text{for all } x \in \mathbb{R}^d,
\]
for some $m>0$, and that $\nabla U$ is globally Lipschitz.

Define the true score
\[
s(x) := \nabla \log p(x) = -\nabla U(x).
\]
Let $\hat{s}$ be a globally Lipschitz function satisfying
\[
\sup_{x \in \mathbb{R}^d} \|\hat{s}(x) - s(x)\| \le \varepsilon.
\]

Consider the SDEs
\[
dX_t = s(X_t)\,dt + \sqrt{2}\,dW_t,
\qquad
d\hat X_t = \hat{s}(\hat X_t)\,dt + \sqrt{2}\,dW_t,
\]
and assume that the second SDE admits an invariant distribution $\hat\mu$.
By Lemma~\ref{lem:gibbs_invariant}, $\mu$ is an invariant distribution of the first SDE.

We first show that $\hat\mu \in \mathcal P_2(\mathbb R^d)$.
Since $s=-\nabla U$ and $U$ is $m$-strongly convex, Lemma~\ref{lem:monotone} applied with
$y=0$ gives
\[
\langle x, s(x)-s(0)\rangle \le -m\|x\|^2
\qquad \text{for all } x\in\mathbb R^d.
\]
Hence
\[
\langle x,s(x)\rangle
=
\langle x, s(x)-s(0)\rangle + \langle x,s(0)\rangle
\le
-m\|x\|^2 + \|s(0)\|\,\|x\|.
\]
Using $\|\hat s(x)-s(x)\|\le \varepsilon$, we obtain
\[
\langle x,\hat s(x)\rangle
\le
\langle x,s(x)\rangle + \varepsilon \|x\|
\le
-m\|x\|^2 + (\|s(0)\|+\varepsilon)\|x\|.
\]
By Young's inequality, there exists a constant $C_0<\infty$ such that
\[
\langle x,\hat s(x)\rangle
\le
-\frac m2 \|x\|^2 + C_0
\qquad \text{for all } x\in\mathbb R^d.
\]

Let
\[
\hat L f(x) := \langle \hat s(x), \nabla f(x)\rangle + \Delta f(x)
\]
be the generator of the perturbed SDE, and set
\[
V(x):=\|x\|^2.
\]
Then
\[
\hat L V(x)
=
2\langle x,\hat s(x)\rangle + 2d
\le
-m\|x\|^2 + C_1
=
-mV(x)+C_1
\]
for some constant $C_1<\infty$.

We now justify the use of this unbounded Lyapunov function by truncation.
Let $\psi_R\in C^2([0,\infty))$ be nondecreasing and concave, with
\[
0\le \psi_R'\le 1,\qquad \psi_R''\le 0,
\]
such that
\[
\psi_R(r)=r \quad \text{for } r\le R,
\qquad
\psi_R(r)=\text{constant} \quad \text{for } r\ge 2R.
\]
Define
\[
V_R(x):=\psi_R(V(x)).
\]
Since $V_R \in C_b^2(\mathbb{R}^d)$ and the perturbed SDE has globally Lipschitz drift, the associated Markov semigroup $(\widehat P_t)_{t\ge 0}$ has generator $\hat L$. Hence
\[
\left.\frac{d}{dt}\widehat P_t V_R\right|_{t=0}
=
\hat L V_R .
\]
By invariance of $\hat\mu$,
\[
\int_{\mathbb{R}^d} \widehat P_t V_R(x)\,\hat\mu(dx)
=
\int_{\mathbb{R}^d} V_R(x)\,\hat\mu(dx)
\quad\text{for all }t\ge 0.
\]
Differentiating at $t=0$ gives
\[
\int_{\mathbb{R}^d} \hat L V_R(x)\,\hat\mu(dx)
=
0.
\]
By the chain rule for $\hat L$,
\[
\hat L V_R
=
\psi_R'(V)\hat L V
+
\psi_R''(V)\|\nabla V\|^2.
\]
Since $\psi_R''\le 0$, we have
\[
\hat L V_R
\le
\psi_R'(V)\hat L V
\le
\psi_R'(V)(-mV+C_1).
\]
Therefore
\[
0
=
\int \hat L V_R\,d\hat\mu
\le
-m\int \psi_R'(V)V\,d\hat\mu
+
C_1\int \psi_R'(V)\,d\hat\mu.
\]
Since $0\le \psi_R'\le 1$, this implies
\[
m\int \psi_R'(V)V\,d\hat\mu
\le
C_1.
\]
Moreover, $\psi_R'(V)=1$ whenever $V\le R$, and hence
\[
m\int_{\{V\le R\}} V\,d\hat\mu
\le
C_1.
\]
Letting $R\to\infty$ and using monotone convergence gives
\[
\int \|x\|^2\,\hat\mu(dx)
=
\int V\,d\hat\mu
\le
\frac{C_1}{m}
<\infty.
\]
Thus $\hat\mu\in\mathcal P_2(\mathbb R^d)$.

Now choose an optimal coupling $(X_0,\hat X_0)$ of $\mu$ and $\hat\mu$ such that
\[
\mathbb E\|X_0-\hat X_0\|^2 = W_2^2(\mu,\hat\mu).
\]
We take this initial coupling to be independent of the Brownian motion below.
Drive both SDEs by the same Brownian motion $(W_t)_{t\ge0}$ and define
\[
Z_t := \hat X_t - X_t.
\]
Subtracting the two SDEs yields
\[
dZ_t = \bigl(\hat s(\hat X_t) - s(X_t)\bigr)\,dt.
\]
Since the stochastic terms cancel, $Z_t$ has absolutely continuous paths and satisfies the
pathwise ODE
\[
\dot Z_t = \hat s(\hat X_t) - s(X_t)
\qquad \text{for a.e. } t\ge0.
\]
Therefore the ordinary chain rule applies, and
\[
\frac{d}{dt}\|Z_t\|^2
=
2\langle Z_t, \hat s(\hat X_t)-s(X_t)\rangle.
\]

We decompose
\[
\hat s(\hat X_t)-s(X_t)
=
\bigl(\hat s(\hat X_t)-s(\hat X_t)\bigr)
+
\bigl(s(\hat X_t)-s(X_t)\bigr).
\]
Hence
\[
\frac{d}{dt}\|Z_t\|^2
=
2\langle Z_t,\hat s(\hat X_t)-s(\hat X_t)\rangle
+
2\langle Z_t,s(\hat X_t)-s(X_t)\rangle.
\]

For the first term, using the uniform bound on the score error,
\[
2\langle Z_t,\hat s(\hat X_t)-s(\hat X_t)\rangle
\le
2\varepsilon \|Z_t\|.
\]
For the second term, Lemma~\ref{lem:monotone} implies
\[
\langle \hat X_t-X_t,\nabla U(\hat X_t)-\nabla U(X_t)\rangle
\ge
m\|\hat X_t-X_t\|^2.
\]
Since $s=-\nabla U$, we get
\[
\langle Z_t,s(\hat X_t)-s(X_t)\rangle
\le
-m\|Z_t\|^2.
\]
Combining the two bounds gives
\[
\frac{d}{dt}\|Z_t\|^2
\le
-2m\|Z_t\|^2 + 2\varepsilon \|Z_t\|.
\]
By Young's inequality,
\[
2\varepsilon \|Z_t\|
\le
m\|Z_t\|^2 + \frac{\varepsilon^2}{m}.
\]
Substituting this into the differential inequality, we obtain
\[
\frac{d}{dt}\|Z_t\|^2
\le
-m\|Z_t\|^2 + \frac{\varepsilon^2}{m}.
\]

Multiplying both sides by $e^{mt}$ and integrating from $0$ to $t$ yields
\[
\|Z_t\|^2
\le
e^{-mt}\|Z_0\|^2
+
\frac{\varepsilon^2}{m^2}\bigl(1-e^{-mt}\bigr)
\qquad \text{a.s.}
\]
Taking expectations gives
\[
\mathbb E\|Z_t\|^2
\le
e^{-mt}\mathbb E\|Z_0\|^2
+
\frac{\varepsilon^2}{m^2}\bigl(1-e^{-mt}\bigr).
\]

Since $X_0\sim\mu$ and $\hat X_0\sim\hat\mu$ are invariant initial laws, we have
\[
X_t\sim\mu,
\qquad
\hat X_t\sim\hat\mu
\qquad \text{for all } t\ge0.
\]
Therefore $(X_t,\hat X_t)$ is a coupling of $\mu$ and $\hat\mu$, so
\[
W_2^2(\hat\mu,\mu)
\le
\mathbb E\|\hat X_t-X_t\|^2
=
\mathbb E\|Z_t\|^2.
\]
Combining this with the previous estimate gives
\[
W_2^2(\hat\mu,\mu)
\le
e^{-mt}W_2^2(\hat\mu,\mu)
+
\frac{\varepsilon^2}{m^2}\bigl(1-e^{-mt}\bigr).
\]
Letting $t\to\infty$, we obtain
\[
W_2^2(\hat\mu,\mu)\le \frac{\varepsilon^2}{m^2}.
\]
Taking square roots gives
\[
W_2(\hat\mu,\mu)\le \frac{\varepsilon}{m}.
\]

\qed

\section*{Proof of Lemma~\ref{lem:manifold_reff}}
\label{app:manifold_reff_proof}

Let
\[
\Sigma := \mathrm{Cov}(X).
\]
Define
\[
R := X-\Pi(X).
\]
Since
\[
\mathbb E X = \mathbb E \Pi(X) + \mathbb E R,
\]
we may write
\[
X-\mathbb E X
=
\bigl(\Pi(X)-\mathbb E\Pi(X)\bigr)
+
\bigl(R-\mathbb E R\bigr).
\]
Therefore, using the inequality $\|u+v\|^2 \le 2\|u\|^2 + 2\|v\|^2$, we obtain
\[
\mathrm{tr}(\Sigma)
=
\mathbb E\|X-\mathbb E X\|^2
\le
2\,\mathbb E\|\Pi(X)-\mathbb E\Pi(X)\|^2
+
2\,\mathbb E\|R-\mathbb E R\|^2.
\]

For the first term, by assumption,
\[
\mathbb E\|\Pi(X)-\mathbb E\Pi(X)\|^2
=
\mathrm{tr}(\mathrm{Cov}(\Pi(X)))
\le
C_1 k.
\]

For the second term, since covariance is dominated by the second moment,
\[
\mathbb E\|R-\mathbb E R\|^2
=
\mathrm{tr}(\mathrm{Cov}(R))
\le
\mathbb E\|R\|^2.
\]
Using the approximation assumption,
\[
\mathbb E\|R\|^2
=
\mathbb E\|X-\Pi(X)\|^2
\le
C_2(\delta^2+\eta).
\]

Combining the two bounds yields
\[
\mathrm{tr}(\Sigma)
\le
2C_1 k + 2C_2(\delta^2+\eta).
\]

By definition,
\[
r_{\mathrm{eff}}(\Sigma)
=
\frac{\mathrm{tr}(\Sigma)}{\|\Sigma\|}.
\]
Using the non-degeneracy assumption $\|\Sigma\|\ge c>0$, we obtain
\[
r_{\mathrm{eff}}(\Sigma)
\le
\frac{2C_1 k + 2C_2(\delta^2+\eta)}{c}
=
\frac{2C_1}{c}\,k + \frac{2C_2}{c}\,(\delta^2+\eta).
\]

In particular, if $\delta,\eta=O(1)$ and $\|\Sigma\|$ is bounded below by a positive
constant, then
\[
r_{\mathrm{eff}}(\Sigma)=O(k+1).
\]
If additionally $k\ge 1$, this simplifies to
\[
r_{\mathrm{eff}}(\Sigma)=O(k).
\]

\qed

\subsection*{Proof of Theorem~\ref{thm:intrinsic}}

Let $X\sim \mu$, and denote
\[
\bar x := \mathbb E X,
\qquad
\Sigma := \mathrm{Cov}(X)
=
\mathbb E\big[(X-\bar x)(X-\bar x)^\top\big].
\]

By the trace identity,
\[
\mathbb E\|X-\bar x\|^2
=
\mathbb E\,\mathrm{tr}\!\big((X-\bar x)(X-\bar x)^\top\big)
=
\mathrm{tr}\!\Big(\mathbb E (X-\bar x)(X-\bar x)^\top\Big)
=
\mathrm{tr}(\Sigma).
\]

Next, by the assumptions of Lemma~\ref{lem:manifold_reff}, we have
\[
\mathrm{tr}(\mathrm{Cov}(\Pi(X))) \le C_1 k,
\qquad
\mathbb E\|X-\Pi(X)\|^2 \le C_2(\delta^2+\eta),
\qquad
\|\Sigma\| \ge c_0 > 0.
\]
Therefore Lemma~\ref{lem:manifold_reff} yields
\[
r_{\mathrm{eff}}(\Sigma)
=
\frac{\mathrm{tr}(\Sigma)}{\|\Sigma\|}
\le
\frac{2C_1}{c_0}\,k + \frac{2C_2}{c_0}\,(\delta^2+\eta).
\]
Since
\[
\mathrm{tr}(\Sigma)=\|\Sigma\|\,r_{\mathrm{eff}}(\Sigma),
\]
it follows that
\[
\mathrm{tr}(\Sigma)
\le
\|\Sigma\|
\left(
\frac{2C_1}{c_0}\,k + \frac{2C_2}{c_0}\,(\delta^2+\eta)
\right).
\]
Using $\mathbb E\|X-\bar x\|^2=\mathrm{tr}(\Sigma)$, we obtain
\[
\bigl(\mathbb E\|X-\bar x\|^2\bigr)^{1/2}
=
\sqrt{\mathrm{tr}(\Sigma)}
\le
\|\Sigma\|^{1/2}
\left(
\frac{2C_1}{c_0}\,k + \frac{2C_2}{c_0}\,(\delta^2+\eta)
\right)^{1/2}.
\]

It remains to prove the concentration statement.
Since
\[
p(x)=Z^{-1}e^{-U(x)}
\]
and
\[
\nabla^2 U(x)\succeq mI
\qquad \text{for all } x\in\mathbb R^d,
\]
the Bakry--\'Emery criterion implies that $\mu$ satisfies a logarithmic Sobolev
inequality with constant of order $1/m$, up to the standard normalization convention.
Consequently, by the Herbst argument, every $1$-Lipschitz function $f:\mathbb R^d\to\mathbb R$
satisfies a Gaussian concentration bound of the form
\[
\mathbb P\!\left(
|f(X)-\mathbb E f(X)|\ge t
\right)
\le
2\exp(-cmt^2)
\qquad \text{for all } t\ge0,
\]
where $c>0$ is an absolute constant.

Now define
\[
f(x):=\|x-\bar x\|.
\]
For any $x,y\in\mathbb R^d$,
\[
|f(x)-f(y)|
=
\bigl|\|x-\bar x\|-\|y-\bar x\|\bigr|
\le
\|x-y\|,
\]
so $f$ is $1$-Lipschitz. Applying the preceding concentration bound to this $f$ gives
\[
\mathbb P\!\left(
\left|
\|X-\bar x\|-\mathbb E\|X-\bar x\|
\right|
\ge t
\right)
\le
2\exp(-cmt^2)
\qquad \text{for all } t\ge0.
\]

This proves the claimed concentration inequality and completes the proof.
\qed

\begin{corollary}

By Jensen's inequality,
\[
\mathbb E\|X-\mathbb E X\|
\le
\sqrt{\mathbb E\|X-\mathbb E X\|^2}
=
\sqrt{\mathrm{tr}(\Sigma)}.
\]
Hence
\[
\mathbb P\!\left(
\|X-\mathbb E X\|
\ge
\sqrt{\mathrm{tr}(\Sigma)}+t
\right)
\le
2\exp(-cmt^2).
\]
In particular, if $\delta,\eta$ are bounded by constants and $\|\Sigma\|\asymp 1$, then there exists
a constant $C>0$ such that
\[
\sqrt{\mathrm{tr}(\Sigma)}\le C\sqrt{k},
\]
and therefore
\[
\mathbb P\!\left(
\|X-\mathbb E X\|\ge C\sqrt{k}+t
\right)
\le
2\exp(-cmt^2).
\]
\end{corollary}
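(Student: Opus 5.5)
The corollary follows from two ingredients already established in the proof of Theorem~\ref{thm:intrinsic}. The first is the Gaussian concentration bound for the $1$-Lipschitz function $f(x)=\|x-\mathbb E X\|$. The second is the trace bound $\mathrm{tr}(\Sigma)\le \|\Sigma\|\bigl(\frac{2C_1}{c_0}k+\frac{2C_2}{c_0}(\delta^2+\eta)\bigr)$. The plan is to replace the random centering $\mathbb E\|X-\mathbb E X\|$ in the concentration inequality by the deterministic quantity $\sqrt{\mathrm{tr}(\Sigma)}$, and then bound that quantity by $C\sqrt k$.

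\emph{Step 1 (Jensen).} Since $r\mapsto r^2$ is convex, or equivalently by Cauchy--Schwarz,
\[
\mathbb E\|X-\mathbb E X\|\le\bigl(\mathbb E\|X-\mathbb E X\|^2\bigr)^{1/2}=\sqrt{\mathrm{tr}(\Sigma)},
\]
where the last equality is the trace identity from the proof of Theorem~\ref{thm:intrinsic}.

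\emph{Step 2 (shift the threshold).} On the event $\{\|X-\mathbb E X\|\ge\sqrt{\mathrm{tr}(\Sigma)}+t\}$, Step 1 gives
\[
\|X-\mathbb E X\|-\mathbb E\|X-\mathbb E X\|\ge t.
\]
This event is therefore contained in the two-sided deviation event of Theorem~\ref{thm:intrinsic}, so its probability is at most $2\exp(-cmt^2)$. A one-sided bound would even allow the constant $1$ instead of $2$, but we keep $2$ for consistency with the theorem.

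\emph{Step 3 (dimension bound).} Assume $\delta,\eta\le K$ and $\|\Sigma\|\le A$. Then
\[
\mathrm{tr}(\Sigma)\le A\Bigl(\frac{2C_1}{c_0}k+\frac{2C_2}{c_0}\cdot 2K^2\Bigr).
\]
To absorb the additive constant into a multiple of $k$, we use $k\ge1$, as in the remark closing the proof of Lemma~\ref{lem:manifold_reff}. This gives $\mathrm{tr}(\Sigma)\le C^2k$ with
\[
C^2=\frac{A(2C_1+4C_2K^2)}{c_0}.
\]
Since $C\sqrt k\ge\sqrt{\mathrm{tr}(\Sigma)}$, the event $\{\|X-\mathbb E X\|\ge C\sqrt k+t\}$ is contained in the event of Step 2, and the final display follows by monotonicity.

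The only point needing care is Step 3. Here one must make explicit that "$\|\Sigma\|\asymp1$" is used only through an upper bound on $\|\Sigma\|$; the lower bound $c_0$ is already assumed. One must also make explicit the implicit assumption $k\ge1$, which is needed so that the $O(1)$ off-manifold term can be absorbed into $C\sqrt k$. Everything else is a direct event inclusion.
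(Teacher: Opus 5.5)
Your proposal is correct and follows the same route the paper's corollary sketches: Jensen, then inclusion of the shifted event in the two-sided deviation event of Theorem~\ref{thm:intrinsic}, then the trace bound with $k\ge 1$, with the bookkeeping made explicit. One small slip in Step~3: from $\delta,\eta\le K$ you get $\delta^2+\eta\le K^2+K$, which is at most $2K^2$ only when $K\ge 1$ (harmless, since you may take $K\ge 1$); also, the intermediate bound $\mathrm{tr}(\Sigma)\le 2C_1k+2C_2(\delta^2+\eta)$ from the proof of Lemma~\ref{lem:manifold_reff} already gives $\sqrt{\mathrm{tr}(\Sigma)}\le C\sqrt{k}$ without any upper bound on $\|\Sigma\|$.
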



\section{Interpretation of the Geometric Proxies}
\label{app:proxy-interpretation}

In this appendix, we clarify why the empirical quantities used in our layer-selection procedure can be interpreted as proxies for the theoretical geometric terms introduced in Section~\ref{sec:theory}.
Our goal is not to recover the exact strong-convexity constant or the exact manifold dimension of the layerwise representation distribution.
Rather, we seek observable quantities that capture the same \emph{functional roles} in diffusion-friendly geometry.

\paragraph{Theoretical role of $m$.}
In our theory, the quantity $m$ appears as the strong-convexity constant of the potential $U(x)=-\log p(x)$, namely
\[
\nabla^2 U(x) \succeq m I.
\]
This means that, in every direction, the potential has at least curvature $m$.
A larger $m$ implies stronger contraction of the corresponding Langevin dynamics and greater robustness to score perturbations.
Therefore, from the perspective of diffusion, $m$ measures how strongly the representation distribution exhibits restoring geometry toward high-density regions.

In practice, however, the exact density $p(x)$ of layer activations is unknown, and only a finite sample of activation vectors is available.
As a result, neither $U(x)$ nor its Hessian $\nabla^2 U(x)$ can be computed exactly.
This motivates the use of empirical curvature-related proxies.

\paragraph{Why $\hat m_{\mathrm{mono}}$ is an $m$-like quantity.}
Our monotonicity proxy is defined from the empirical covariance $\Sigma$ and its precision matrix $P=\Sigma^{-1}$.
For sampled pairs $(x_i,x_j)$, we compute
\[
m_{ij}
=
\frac{(x_i-x_j)^\top P (x_i-x_j)}{\|x_i-x_j\|^2},
\]
and summarize these values by a robust statistic such as the median.

This quantity is closely related to the role of $m$ in the Gaussian case.
Indeed, if the layerwise representation distribution were exactly Gaussian with mean $\mu$ and precision matrix $P$, then
\[
U(x)
=
\frac12 (x-\mu)^\top P (x-\mu) + \mathrm{const},
\]
and hence
\[
\nabla^2 U(x)=P.
\]
In that setting, the strong-convexity constant is precisely
\[
m=\lambda_{\min}(P).
\]
Moreover, for any displacement vector $\delta$, the Rayleigh quotient
\[
\frac{\delta^\top P\delta}{\|\delta\|^2}
\]
measures directional curvature under the quadratic potential.
Therefore, $\hat m_{\mathrm{mono}}$ can be interpreted as an empirical summary of the typical directional stiffness of the representation space.
Although it is not an exact lower bound on the Hessian, it captures the same global geometric intuition: layers with larger $\hat m_{\mathrm{mono}}$ behave as if they are embedded in a more strongly restoring global geometry.

\paragraph{Why $\hat m_{\mathrm{curv}}$ is also an $m$-like quantity.}
Our local curvature proxy is based on the covariance of local neighborhoods.
For each anchor point, we compute the covariance matrix $\Sigma_{\mathrm{local}}$ of its $k$-nearest neighbors and define a local score proportional to
\[
\frac{1}{\lambda_{\max}(\Sigma_{\mathrm{local}})}.
\]
The layer-level statistic $\hat m_{\mathrm{curv}}$ is then obtained by aggregating these local values across anchors.

This proxy is motivated by a local quadratic approximation.
If a small neighborhood of the representation distribution is approximately Gaussian, then its local density may be written as
\[
p_{\mathrm{local}}(x)
\propto
\exp\!\left(
-\frac12 (x-\mu_{\mathrm{local}})^\top P_{\mathrm{local}} (x-\mu_{\mathrm{local}})
\right),
\]
with $P_{\mathrm{local}} \approx \Sigma_{\mathrm{local}}^{-1}$.
Under this approximation, the local Hessian of the negative log-density is given by $P_{\mathrm{local}}$, and the corresponding local strong-convexity scale is
\[
\lambda_{\min}(P_{\mathrm{local}})
=
\lambda_{\min}(\Sigma_{\mathrm{local}}^{-1})
=
\frac{1}{\lambda_{\max}(\Sigma_{\mathrm{local}})}.
\]
Thus, $\hat m_{\mathrm{curv}}$ measures how compact and sharply curved the local representation neighborhoods are.
Larger values indicate smaller local spread along the most variable direction, which is consistent with the intuition of stronger local restoring geometry.

\paragraph{Why $\hat k$ reflects intrinsic dimension.}
The theoretical quantity $k$ is intended to capture the intrinsic complexity of the representation distribution, rather than its ambient dimension $d$.
In our empirical procedure, we measure this using the effective rank of the covariance:
\[
\hat k
=
r_{\mathrm{eff}}(\Sigma)
=
\frac{\mathrm{tr}(\Sigma)}{\|\Sigma\|}.
\]

This quantity has a direct interpretation in idealized low-dimensional settings.
Suppose the data are supported exactly on a $k$-dimensional isotropic subspace with covariance eigenvalues
\[
\lambda_1=\cdots=\lambda_k=\sigma^2,
\qquad
\lambda_{k+1}=\cdots=\lambda_d=0.
\]
Then
\[
r_{\mathrm{eff}}(\Sigma)
=
\frac{k\sigma^2}{\sigma^2}
=
k.
\]
Hence, in this ideal case, the effective rank exactly recovers the intrinsic dimension.

More generally, when the representation distribution is concentrated near a low-dimensional manifold or subspace, the covariance spectrum typically contains a small number of dominant eigenvalues and many small residual ones.
In such cases, $r_{\mathrm{eff}}(\Sigma)$ measures the effective number of active variation directions.
For our purposes, this is the relevant notion of intrinsic dimension, since diffusion complexity depends not on the nominal ambient dimension but on how many directions carry meaningful variation.

\paragraph{Interpretation and limitation.}
Taken together, $\hat m_{\mathrm{mono}}$ and $\hat m_{\mathrm{curv}}$ serve as global and local proxies for curvature-related restoring geometry, while $\hat k$ serves as an operational measure of intrinsic dimension.
These quantities are not exact estimators of the theoretical constants in Section~\ref{sec:theory}.
Instead, they are empirical surrogates designed to preserve the same geometric design principle:
diffusion-friendly layers should be locally compact, globally stable, and effectively low-dimensional.

This distinction is important.
Our empirical layer-selection score should therefore be interpreted as an operational criterion motivated by theory, rather than as a direct numerical estimate of the exact constants appearing in the theorems.


\section{Geometric Proxy Estimation and Layer Selection}
\label{app:proxy-estimation}

This appendix describes the implementation details of the empirical geometric proxies used in the \emph{Locate} stage, together with the practical layer-selection procedure.
Where Appendix~\ref{app:proxy-interpretation} explains why these quantities can be interpreted as theory-motivated surrogates for curvature and intrinsic dimension, the present appendix focuses on how they are computed in practice and how they are combined into a robust empirical selection rule.

\subsection{Representation Extraction}
\label{app:representation-extraction}

For each transformer layer, we begin from hidden activations of shape
\[
\mathrm{acts} \in \mathbb{R}^{N \times S \times H},
\]
where $N$ denotes the number of sequences, $S$ the sequence length, and $H$ the hidden dimension.
For example, a layer output may have shape $[B,S,H]=[32,1024,4096]$.

To compute geometric statistics, we convert these activations into a set of representation vectors
\[
x \in \mathbb{R}^{M \times D},
\]
using an extraction function that supports several pooling modes.

\paragraph{Mean pooling.}
In the \texttt{mean} setting, each sequence is mapped to a single vector by masked averaging over valid tokens:
\[
x_n
=
\frac{\sum_{t=1}^{S} a_{nt} \, h_{nt}}
{\sum_{t=1}^{S} a_{nt}},
\]
where $a_{nt}\in\{0,1\}$ is the attention mask and $h_{nt}\in\mathbb{R}^H$ is the hidden state at token position $t$.
This yields one representation vector per sequence.

\paragraph{Last-token pooling.}
In the \texttt{last} setting, we take the hidden state of the final valid token in each sequence.
This again yields one vector per sequence, while preserving a token-level representation associated with the sequence endpoint.

\paragraph{Tokenwise extraction.}
In the \texttt{token} setting, all valid token representations are flattened across sequences and positions.
If necessary, a random subset is retained to control memory and computational cost.
This yields a larger collection of vectors and allows the layer geometry to be estimated directly from tokenwise activations.

\paragraph{Mask handling.}
Padding positions with \texttt{attention\_mask}=0 are excluded from all computations.
Special tokens are retained whenever they are marked as valid by the attention mask.
Thus, the extracted representation set reflects the actual active tokenwise computation of the model.

\paragraph{Recorded representation statistics.}
For transparency and reproducibility, we store layerwise summary statistics describing the extracted representation set, including the number of sequences, sequence length, hidden dimension, number of vectors before and after projection, and the corresponding feature dimensions.
These quantities make it possible to verify that proxy estimation is based on comparable representations across layers.
In the main experiments, we use \texttt{mean} pooling and retain at most \texttt{max\_seqs}=2{,}000 sequences and \texttt{max\_tokens}=200{,}000 valid tokens per layerwise geometry run.

\subsection{Preprocessing and Random Projection}
\label{app:preprocessing-random-projection}

Because hidden representations can be high-dimensional, we optionally apply random projection before estimating the geometric proxies.
This serves two purposes: it improves numerical stability in covariance-based calculations, and it reduces the computational cost of repeated layerwise estimation.

Given vectors $x \in \mathbb{R}^{M\times D}$, we construct a Gaussian random matrix
\[
R \in \mathbb{R}^{D \times d_{\mathrm{proj}}},
\]
followed by QR orthonormalization to obtain an approximately orthonormal projection basis.
The projected representations are then
\[
x_{\mathrm{proj}} = xR \in \mathbb{R}^{M \times d_{\mathrm{proj}}}.
\]

If the projection dimension satisfies $d_{\mathrm{proj}} \le 0$ or $d_{\mathrm{proj}} \ge D$, projection is skipped and the original vectors are used.
This design ensures that the projection acts only as a computational device and does not alter the pipeline when dimensionality reduction is unnecessary.

The main hyperparameters governing this step are the projection dimension, the pooling mode, the maximum number of sequences or tokens retained, and the ridge regularization used in subsequent covariance inversion.

\subsection{Local Curvature Proxy}
\label{app:local-curvature-proxy}

To capture local geometric compactness, we estimate a curvature-inspired proxy from neighborhoods in representation space.
For a given layer representation set $x=\{x_1,\dots,x_M\}\subset\mathbb{R}^D$, we sample a set of anchor points and, for each anchor, identify its $k$ nearest neighbors under Euclidean distance.
Distances are computed on the same representation matrix used for proxy estimation, i.e., after the optional projection step when projection is enabled.

Let $\mathcal{N}_i$ denote the neighborhood of anchor $x_i$.
We compute the local covariance matrix
\[
\Sigma_{\mathrm{local}}^{(i)}
=
\mathrm{Cov}(\mathcal{N}_i) + \lambda I,
\]
where $\lambda>0$ is a small ridge term added for numerical stability.
The local curvature score is then defined as
\[
m_{\mathrm{curv}}^{(i)}
=
\frac{1}{\lambda_{\max}\!\left(\Sigma_{\mathrm{local}}^{(i)}\right)}.
\]

Intuitively, this quantity is large when the local neighborhood is compact even along its most variable direction, which is consistent with the notion of strong local restoring geometry.

To obtain a layer-level statistic, we aggregate the anchorwise values using a robust summary:
\[
\hat m_{\mathrm{curv}}
=
\mathrm{median}_i\bigl(m_{\mathrm{curv}}^{(i)}\bigr).
\]
In addition, we record quantiles such as the 25th, 50th, and 75th percentiles in order to characterize variation across local neighborhoods and to support uncertainty-aware visualization.

\subsection{Monotonicity Proxy}
\label{app:monotonicity-proxy}

To capture a global notion of directional stiffness, we compute a monotonicity-inspired proxy from the empirical covariance of the layer representations.
Let
\[
\Sigma = \mathrm{Cov}(x) + \lambda I
\]
be the ridge-regularized empirical covariance and let
\[
P = \Sigma^{-1}
\]
denote its precision matrix.

We then sample random pairs $(x_i,x_j)$ and compute
\[
m_{ij}
=
\frac{(x_i-x_j)^\top P (x_i-x_j)}
{\|x_i-x_j\|^2}.
\]
This is a Rayleigh-quotient-type quantity that measures how strongly a pairwise displacement is penalized by the global precision geometry.

The layer-level monotonicity proxy is defined by a robust summary over sampled pairs:
\[
\hat m_{\mathrm{mono}}
=
\mathrm{median}_{(i,j)}(m_{ij}).
\]
As with the local curvature proxy, we additionally record quantiles and confidence intervals obtained by bootstrap resampling. In the main experiments, we report 95\% bootstrap percentile intervals for the median monotonicity estimate. These summaries allow us to assess not only the central tendency of global directional stiffness, but also its stability under finite-sample variation.

\subsection{Effective Rank}
\label{app:effective-rank}

To estimate the effective intrinsic dimension of the layerwise representation distribution, we compute the effective rank of the empirical covariance:
\[
\hat k
=
r_{\mathrm{eff}}(\Sigma)
=
\frac{\mathrm{tr}(\Sigma)}{\|\Sigma\|},
\]
where $\|\Sigma\|$ denotes the spectral norm, i.e., the largest eigenvalue of $\Sigma$.

This quantity measures the effective number of active directions of variation in the representation space.
A smaller value indicates that the representation is concentrated along fewer dominant directions, which is favorable from the perspective of diffusion complexity.

In the main text, we use effective rank as the primary intrinsic-dimension statistic.
In addition to the central estimate $\hat k$, we optionally record related quantities such as the participation ratio and bootstrap summaries as supplementary diagnostics.
To reflect finite-sample uncertainty, we also store quantiles such as
\[
\hat k_{\mathrm{q25}},\quad \hat k_{\mathrm{q50}},\quad \hat k_{\mathrm{q75}}.
\]

\subsection{Selection Score Construction}
\label{app:selection-score}

After computing $\hat m_{\mathrm{curv}}$, $\hat m_{\mathrm{mono}}$, and $\hat k$ for each layer, we combine them into a single empirical layer-selection score.

\paragraph{Baseline score.}
A simple baseline motivated by the curvature--dimension principle is
\[
\mathrm{selection\_score}_{\mathrm{base}}(\ell)
=
z(\log \hat m_{\mathrm{curv}}(\ell))
-
z(\log \hat k(\ell)),
\]
where $z(\cdot)$ denotes z-score normalization across layers within the same model. Intuitively, the baseline rewards curvature-related structure while penalizing excessive effective dimension.

\paragraph{Final score used in practice.}
In the final implementation, we use a fixed score based on log-transformed layerwise statistics:
\[
\mathrm{selection\_score}(\ell)
=
\alpha_1 z(\log \hat m_{\mathrm{curv}}(\ell))
+
\alpha_2 z(\log \hat m_{\mathrm{mono}}(\ell))
+
\alpha_3 z(\log \hat k(\ell)).
\]
We use the fixed coefficients
\[
(\alpha_1,\alpha_2,\alpha_3)=(1.0,\,1.0,\,-1.0).
\]
Thus, the final score becomes
\[
\mathrm{selection\_score}(\ell)
=
z(\log \hat m_{\mathrm{curv}}(\ell))
+
z(\log \hat m_{\mathrm{mono}}(\ell))
-
z(\log \hat k(\ell)).
\]

This score reflects the curvature--dimension principle.
First, both local and global curvature-related quantities are rewarded through $\hat m_{\mathrm{curv}}$ and $\hat m_{\mathrm{mono}}$.
Second, effective intrinsic dimension is penalized through $\hat k$.
We use only a linear effective-rank penalty because the sensitivity analysis below shows that an additional quadratic penalty can over-penalize some low-rank layers and lead to unstable layer selection.

The selected target layer is then defined by
\[
\ell^*
=
\arg\max_{\ell}\,
\mathrm{selection\_score}(\ell).
\]
For convenience, we also report
\[
\mathrm{predicted\_loss}(\ell) = -\,\mathrm{selection\_score}(\ell),
\]
so that lower predicted loss corresponds to a more diffusion-friendly layer.

\paragraph{Sensitivity to coefficient choice.}
To address concerns that the fixed coefficients might implicitly overfit, we evaluate a small set of coefficient perturbations around the final score and report (i) the selected layer $\ell^*$, (ii) the validation loss at $\ell^*$, and (iii) rank correlations between the score and the validation loss.
Table~\ref{tab:alpha-sens} summarizes the results computed from \texttt{layerwise\_geometry.json} and \texttt{teacher\_loss\_frozen.csv} (excluding layer~0).
The final linear score selects the oracle-best layer in this sweep while maintaining a strong monotonic relationship with validation bridge loss.
By contrast, adding or strengthening the quadratic effective-rank penalty can shift the selected layer from early layers to layer~6, which yields a large degradation in validation loss in this setting.
This suggests that effective rank is useful as a soft complexity penalty, but overly strong rank penalties can hurt top-layer selection.

\begin{table}[t]
\centering
\small
\begin{tabular}{lcccccc}
\toprule
Preset & $(\alpha_1,\alpha_2,\alpha_3,\alpha_4)$ & $\ell^*$ & $\mathcal{L}(\ell^*)$ & oracle $\ell$ & gap & Spearman \\
\midrule
final, no $k^2$ & (1.0,1.0,-1.0,0.0) & 2 & 0.059 & 1 & 0.028 & 0.940 \\
baseline w/ $k^2$ & (1.0,1.0,-1.0,-0.5) & 2 & 0.087 & 1 & 0.028 & 0.859 \\
curv$\times$0.5 & (0.5,1.0,-1.0,-0.5) & 6 & 139.092 & 1 & 139.033 & 0.729 \\
curv$\times$1.5 & (1.5,1.0,-1.0,-0.5) & 2 & 0.087 & 1 & 0.028 & 0.916 \\
mono$\times$0.5 & (1.0,0.5,-1.0,-0.5) & 6 & 139.092 & 1 & 139.033 & 0.733 \\
mono$\times$1.5 & (1.0,1.5,-1.0,-0.5) & 2 & 0.087 & 1 & 0.028 & 0.916 \\
k$_{\text{lin}}\times$0.5 & (1.0,1.0,-0.5,-0.5) & 2 & 0.087 & 1 & 0.028 & 0.950 \\
k$_{\text{lin}}\times$1.5 & (1.0,1.0,-1.5,-0.5) & 6 & 139.092 & 1 & 139.033 & 0.725 \\
k$_{\text{sq}}\times$0.5 & (1.0,1.0,-1.0,-0.25) & 2 & 0.087 & 1 & 0.028 & 0.898 \\
k$_{\text{sq}}\times$2.0 & (1.0,1.0,-1.0,-1.0) & 6 & 139.092 & 1 & 139.033 & 0.733 \\
\bottomrule
\end{tabular}
\caption{Sensitivity of layer selection to coefficient perturbations.
Numbers are produced by our analysis script.
The final score removes the quadratic effective-rank penalty and uses only a linear curvature--dimension trade-off.}
\label{tab:alpha-sens}
\end{table}

\subsection{Implementation Details of the Diffusion Bridge}
\label{app:bridge-details}

In the embedding-conditioned setting used in our experiments, the conditioning signal \(c(x)\) is the dumped activation tensor \(\mathrm{embedding\_out} \in \mathbb{R}^{B \times S \times H}\), i.e., the hidden state immediately before the first transformer block of the source language model. This tensor is passed through a learned condition projection and mapped to the 768-dimensional \(\mathrm{encoder\_hidden\_states}\) interface expected by the pretrained UNet. Thus, the diffusion backbone is used as a conditional latent denoiser over language-model hidden states rather than as a text-to-image model.

To instantiate the bridge, we adopt a Stable-Diffusion-style latent denoising architecture built on a pretrained UNet. The target hidden state \(h_{\ell^\ast}\) is reshaped and projected into an image-like tensor, encoded by a frozen VAE into a latent \(z_{\ell^\ast}\), noised to \(z_t\) at timestep \(t\) using a fixed diffusion scheduler, and denoised by the UNet to predict \(\hat \epsilon_\theta(z_t,t,c)\). The denoised latent is then decoded and projected back to the original hidden space to obtain the reconstructed hidden state \(\hat h_{\ell^\ast}\).

\paragraph{Hidden-to-image layout.}
Given a selected-layer hidden state \(h_{\ell^\ast}\in\mathbb{R}^{B\times S\times H}\), we first reshape the sequence dimension into a fixed spatial grid and apply a learned \(1\times 1\) projection to map the hidden dimension to the channel dimension required by the latent denoising backbone. This produces an image-like tensor of shape \((C,H_{\mathrm{img}},W_{\mathrm{img}})\). The reshape is deterministic and does not use image supervision. After denoising, an inverse learned projection maps the output back to \(\mathbb{R}^{B\times S\times H}\).

In our experiments with \(S=1024\) and hidden size \(H=4096\), we use \(H_{\mathrm{img}}=32\), \(W_{\mathrm{img}}=32\), and \(C=3\), matching the VAE input-channel interface of SD-v1.5. This channel projection should be understood as an architectural interface to the pretrained latent denoising stack, not as an assumption that language hidden states are naturally RGB images.

We do not assume that language hidden states have natural 2D image semantics. The Stable-Diffusion-style backbone is used only as a high-capacity conditional denoiser. The 2D layout is therefore an architectural interface rather than a semantic image representation. To test this design choice, we compare it against sequence-native deterministic and denoising backbones in Appendix~\ref{app:bridge_architecture}.

For the late-stage alignment losses, the language-modeling term is defined as \(\mathcal{L}_{\mathrm{LM}} = \mathrm{CE}(p_{\mathrm{bridge}}(x_{i+1}\mid x_{\le i}), x_{i+1})\), and the temperature-scaled distillation term is \(\mathcal{L}_{\mathrm{KD}} = T^2\,\mathrm{KL}(\mathrm{softmax}(o_{\mathrm{teacher}}/T)\,\|\,\mathrm{softmax}(o_{\mathrm{bridge}}/T))\), where \(o_{\mathrm{teacher}}\) and \(o_{\mathrm{bridge}}\) denote the teacher and bridge logits, respectively.

During training, we update the UNet denoiser together with the bridge-specific projection modules, including the hidden-to-image mapping, the image-to-hidden mapping, and the condition projection. By contrast, the VAE is kept frozen and the diffusion scheduler is fixed. Under the embedding-conditioned setting described above, no Stable-Diffusion text-conditioning pathway is used. This isolates the Replace stage as a hidden-space learning problem: the bridge is trained to reproduce the selected-layer hidden state while remaining compatible with the retained upper transformer stack.

\section{Experimental Setup Details}
\label{app:exp-setup}

This appendix provides experimental and implementation details supplementing Section~\ref{sec:experiments}. It describes the data sampling pipeline, representation extraction, geometric proxy estimation, and optimization hyperparameters used for the layer-sweep and full-training experiments.

\paragraph{Data sampling and activation dumps.}
We construct the activation corpus from a local raw-text copy of \textit{Dolma v1.7}. The dump pipeline samples up to 300{,}000 text sequences using stratified round-robin sampling over 700 source files, with up to 500 samples per file and random seed 42. Each sequence is tokenized with the corresponding source-model tokenizer and passed through the backbone model to save the embedding output, all decoder-layer outputs, input IDs, and attention masks. Activations are computed in reduced precision and stored as float16 shards for subsequent geometry estimation and bridge training.

\paragraph{Representation extraction.}
Sequences are tokenized with the source tokenizer under its default special-token handling, so special tokens are retained in the stored input IDs and activation tensors. Padding positions are tracked by the attention mask and excluded whenever masked averaging is used. When token IDs are decoded back into text for prompt reconstruction, special tokens are removed. For geometry estimation, the implementation supports last-token pooling and token-level sampling, but we use mean pooling by default.

\paragraph{Geometry estimation.}
The default geometry pipeline uses no random projection (\(d_{\mathrm{proj}}=0\)), ridge coefficient \(10^{-3}\), \(k=64\) nearest neighbors for local curvature, 512 anchor points, 200{,}000 sampled pairs for monotonicity, and bootstrap resampling with 95\% confidence intervals. In the main experiments, layerwise geometry proxies are estimated from repeated subsamples as described in Section~\ref{sec:setup}.

\paragraph{Bridge training.}
Bridge training uses a deterministic shard-level split with validation ratio 0.1 and split seed 42. In the fixed-budget layer sweep, each candidate layer is trained for one epoch with up to 150{,}000 training examples, batch size 4, learning rate \(3\times10^{-5}\), AdamW optimization, mixed-precision FP16, and a maximum of 37{,}500 optimization steps. We use bridgeability to mean how easily a layer's hidden state can be reconstructed by the diffusion bridge under this fixed budget, and measure it by validation bridge loss. Training losses are reported as auxiliary optimization diagnostics. The codebase also supports applying the auxiliary next-token language-modeling loss \(\mathcal{L}_{\mathrm{LM}}\) and teacher-student distillation loss \(\mathcal{L}_{\mathrm{KD}}\) only in the final epoch.

\paragraph{Compute resources.}
The fixed-budget layer-sweep experiments were conducted on NVIDIA H100 GPUs with a matched 40 GPU-hour budget. Each candidate layer was trained for one epoch on 150K examples using batch size 4 and mixed-precision FP16. The final full-training experiments, which are used for the main model-quality evaluation, were conducted on NVIDIA B200 GPUs for 40 hours per main run, training the selected bridge for four epochs on the 300K-example corpus. We additionally report peak GPU memory, throughput, and latency in the final evaluation table. Activation dumping stores float16 hidden states for the evaluated 8B-scale backbones, and therefore storage requirements scale approximately linearly with the number of layers and sampled sequences.

\subsection{Diffusion Bridge Architecture and Backbone Choices}
\label{app:bridge_architecture}

\paragraph{Backbone choice.}
We choose a Stable-Diffusion-style UNet bridge because the replacement module must solve a high-dimensional conditional denoising problem over continuous representations, rather than a standard next-token prediction task. The target is an internal boundary hidden state $h_{\ell^\ast}$, and the conditioning signal is the embedding-derived representation of the same input. We therefore require a backbone that can denoise structured, high-dimensional activations while exploiting the conditioning signal effectively.

Table~\ref{tab:hidden_bridge_valid_loss_layer16_with_teacher} compares several bridge backbones under the same validation setting on layer~16 ($S{=}512$, $N{=}500$). The Stable-Diffusion-style UNet bridge achieves the lowest validation loss among the evaluated trainable alternatives. Its mean loss (1163.59) is lower than the MLP-based hidden DDPM bridge (1411.36), corresponding to a 17.6\% reduction. It also substantially outperforms the Transformer-based and 1D-convolutional bridges, reducing validation loss by 58.4\% relative to \texttt{ddpm\_hidden\_transformer} and 68.3\% relative to \texttt{ddpm\_hidden\_conv1d}.

These results suggest that reusing the Stable-Diffusion UNet as a conditional denoising backbone is a practical choice in our setting. We do not claim that this architecture is optimal in general; rather, it provides the strongest validation performance among the tested backbones under the same small-scale ablation. Importantly, the model is not used as an image generator and does not rely on image supervision; we only reuse the latent denoising structure as a conditional backbone for reconstructing language-model hidden states.

\begin{table}[t]
\caption{Validation bridge loss on layer 16 (S=512, N=500).}
\centering
\small
\setlength{\tabcolsep}{6pt}
\begin{tabular}{lccccc}
\toprule
Bridge & Layer & Split & $S$ & $N$ & Mean loss $\downarrow$ \\
\midrule
stable\_diffusion\_UNet & 16 & valid & 512 & 500 & 1163.587321 \\
ddpm\_hidden & 16 & valid & 512 & 500 & 1411.363999 \\
ddpm\_hidden\_transformer & 16 & valid & 512 & 500 & 2799.283554 \\
ddpm\_hidden\_conv1d & 16 & valid & 512 & 500 & 3672.299440 \\
\bottomrule
\end{tabular}
\label{tab:hidden_bridge_valid_loss_layer16_with_teacher}
\end{table}

\section{Baseline Details}
\label{app:baseline-details}

\paragraph{Parameter accounting.}
Table~\ref{tab:baseline_param_counts} reports parameter counts for the same-budget representation-space comparison. These counts are based on the current implementation and notebook configuration used in our experiments. Trainable parameters are those updated by the optimizer. Active bridge/module parameters are parameters used in the forward pass for the corresponding diffusion or recovery module, including frozen modules when applicable. For DiHAL, the retained pretrained upper transformer layers and original LM head are not counted as trainable bridge parameters, but they are part of the active language-modeling interface.

\begin{table}[t]
\centering
\small
\setlength{\tabcolsep}{5pt}
\begin{tabular}{lrr}
\toprule
Method 
& Trainable params
& Active bridge/module params \\
\midrule
DiHAL 
& 862.7M
& 946.3M bridge \\
Diffusion-LM
& 93.7M
& 93.7M \\
SED
& 73.7M
& 73.7M \\
LD4LG
& 25.6M AE + 188M diffusion
& 25.6M AE + 188M diffusion \\
CoDAR
& 130M diffusion + 230.9M AR decoder
& 360.9M \\
\bottomrule
\end{tabular}
\caption{Trainable and active parameter counts for the same-budget representation-space comparison. Counts are measured or estimated under our experimental configurations. Trainable parameters are updated by the optimizer, while active bridge/module parameters are used in the forward pass, including frozen modules where applicable. For CoDAR, the count is estimated from the reported MDLM-style diffusion backbone and GPT-2-small-style autoregressive decoder with cross-attention under the Qwen tokenizer vocabulary.}
\label{tab:baseline_param_counts}
\end{table}

\section{Inference Cost Detail}
\label{sec:inference-cost}

\section{Inference Cost Detail}
\label{sec:inference-cost}

Because \texttt{DiHAL} replaces a transformer prefix with a diffusion bridge, its end-to-end inference cost depends on two factors: the insertion depth and the number of denoising steps. We therefore measure latency, throughput, peak memory, and the number of function evaluations (NFEs) across several insertion layers. The measurement includes the full hybrid inference path: bridge denoising, retained upper transformer layers, and the original LM head.

Table~\ref{tab:efficiency} shows the resulting cost trade-off. For the geometry-selected shallow insertion layers, \texttt{DiHAL} is slower than the original backbone even with a single denoising step, and latency increases substantially as NFEs grow. This indicates that bridge denoising is the dominant overhead in the current implementation. Deeper insertion layers can reduce latency at NFE=1 by skipping more transformer layers, but these layers are not selected by the geometry criterion and are less reliable as hidden-state reconstruction targets. Peak memory also increases because the diffusion bridge adds extra active modules.

Overall, these results support our limitation claim that the current \texttt{DiHAL} implementation should not be interpreted as an end-to-end acceleration method. Instead, the table clarifies the cost-quality trade-off: deeper replacement can reduce retained-transformer computation, but diffusion denoising overhead and hidden-state reconstruction difficulty limit practical acceleration in the present setting.

\begin{table}[t]
\centering
\scriptsize
\setlength{\tabcolsep}{3pt}
\caption{End-to-end inference cost. DiHAL cost includes bridge denoising, retained upper transformer layers, and the LM head.}
\resizebox{\linewidth}{!}{
\begin{tabular}{llccccc}
\toprule
Model & Method & Insert. layer & NFEs & Latency/token $\downarrow$ & Throughput $\uparrow$ & Peak mem. $\downarrow$ \\
\midrule
Llama-3.1-8B & Original & -- & -- & 0.043 & 23509 & 16.2 \\
Llama-3.1-8B & DiHAL & 3 & 1 & 0.074 & 13481 & 18.0 \\
Llama-3.1-8B & DiHAL & 3 & 4 & 0.150 & 6681 & 18.0 \\
Llama-3.1-8B & DiHAL & 3 & 20 & 0.503 & 1989 & 18.0 \\
Llama-3.1-8B & DiHAL & 10 & 1 & 0.065 & 15314 & 18.0 \\
Llama-3.1-8B & DiHAL & 10 & 4 & 0.146 & 6844 & 18.0 \\
Llama-3.1-8B & DiHAL & 10 & 20 & 0.515 & 1941 & 18.0 \\
Llama-3.1-8B & DiHAL & 20 & 1 & 0.051 & 19467 & 18.0 \\
Llama-3.1-8B & DiHAL & 20 & 4 & 0.116 & 8597 & 18.0 \\
Llama-3.1-8B & DiHAL & 20 & 20 & 0.503 & 1988 & 18.0 \\
Llama-3.1-8B & DiHAL & 30 & 1 & 0.035 & 28676 & 18.0 \\
Llama-3.1-8B & DiHAL & 30 & 4 & 0.114 & 8798 & 18.0 \\
Llama-3.1-8B & DiHAL & 30 & 20 & 0.502 & 1993 & 18.0 \\
\midrule
Qwen3-8B & Original & -- & -- & 0.059 & 17033 & 15.9 \\
Qwen3-8B & DiHAL & 2 & 1 & 0.081 & 12375 & 17.6 \\
Qwen3-8B & DiHAL & 2 & 4 & 0.130 & 7668 & 17.6 \\
Qwen3-8B & DiHAL & 2 & 20 & 0.503 & 1989 & 17.6 \\
Qwen3-8B & DiHAL & 12 & 1 & 0.066 & 15222 & 17.6 \\
Qwen3-8B & DiHAL & 12 & 4 & 0.132 & 7554 & 17.6 \\
Qwen3-8B & DiHAL & 12 & 20 & 0.506 & 1977 & 17.6 \\
Qwen3-8B & DiHAL & 22 & 1 & 0.047 & 21156 & 17.6 \\
Qwen3-8B & DiHAL & 22 & 4 & 0.119 & 8428 & 17.6 \\
Qwen3-8B & DiHAL & 22 & 20 & 0.500 & 2002 & 17.6 \\
Qwen3-8B & DiHAL & 32 & 1 & 0.036 & 27581 & 17.6 \\
Qwen3-8B & DiHAL & 32 & 4 & 0.093 & 10769 & 17.6 \\
Qwen3-8B & DiHAL & 32 & 20 & 0.500 & 2001 & 17.6 \\
\bottomrule
\end{tabular}
}
\label{tab:efficiency}
\end{table}

\section{Existing assets and licenses.}
Our experiments use publicly available pretrained backbones, datasets, and model components. We use Llama-3.1-8B-Instruct under the Llama 3.1 Community License and Qwen3-8B under the Apache 2.0 license. We use Dolma v1.7 for activation extraction and bridge training under the ODC-BY license, and WikiText-103 for evaluation under its Creative Commons/GFDL licensing terms. The diffusion bridge reuses Stable Diffusion v1.5 components under the CreativeML OpenRAIL-M license, and generative perplexity is computed using a GPT-2 evaluator under the GPT-2 license terms. We cite the original papers and model or dataset sources for all existing assets and use them only for research evaluation consistent with their respective terms.

\end{document}